\colorlet{linkequation}{blue}
\newcommand{\R}{\mathbb{R}}
\newcommand{\N}{\mathbb{N}}
\renewcommand{\S}{\mathbb{S}}
\newcommand{\<}{\langle}
\renewcommand{\>}{\rangle}
\newcommand{\diag}{\text{diag}}
\def\bzero{{\boldsymbol 0}}
\newtheoremstyle{myremark} %
    {\topsep}                    %
    {\topsep}                    %
    {\rm}                        %
    {}                           %
    {\bf}                        %
    {.}                          %
    {.5em}                       %
    {}  %
\newtheorem{claim}{Claim}
\DeclareSymbolFont{rsfs}{U}{rsfs}{m}{n}
\DeclareSymbolFontAlphabet{\mathscrsfs}{rsfs}
\def\bU{{\boldsymbol U}}
\def\ba{{\boldsymbol a}}
\def\bu{{\boldsymbol u}}
\def\bv{{\boldsymbol v}}
\def\bw{{\boldsymbol w}}
\def\bx{{\boldsymbol x}}
\def\by{{\boldsymbol y}}
\def\cV{{\mathcal V}}
\def\cP{{\mathcal P}}
\def\cL{{\mathcal L}}
\def\cF{{\mathcal F}}
\def\cV{{\mathcal V}}
\def\cP{{\mathcal P}}
\def\cH{{\mathcal H}}
\def\cF{{\mathcal F}}
\def\cV{{\mathcal V}}
\def\diag{{\rm diag}}
\newcommand{\pd}[2]{\frac{\partial #1}{\partial #2}}
\newcommand{\dd}[2]{\frac{d #1}{d #2}}
\def\barh{\bar{h}}
\def\barr{\bar{r}}
\def\barw{\bar{w}}
\def\barbw{\bar{\bw}}
\def\barw{\bar{w}}
\newenvironment{fminipage}%
  {\begin{Sbox}\begin{minipage}}%
  {\end{minipage}\end{Sbox}\fbox{\TheSbox}}
\newcommand{\Lip}{\mathrm{Lip}}
\theoremstyle{plain}
\newtheorem{theorem}{Theorem}[section]
\newtheorem{proposition}[theorem]{Proposition}
\newtheorem{lemma}[theorem]{Lemma}
\newtheorem{corollary}[theorem]{Corollary}
\theoremstyle{definition}
\newtheorem{definition}[theorem]{Definition}
\theoremstyle{remark}
\title{Tight conditions for when the NTK approximation is valid}
\author{Enric Boix-Adser\`a \\ MIT, Apple \\ \texttt{eboix@mit.edu} \and Etai Littwin \\ Apple \\ \texttt{elittwin@apple.com}
      }
\begin{document}

\maketitle

\begin{abstract}
We study when the neural tangent kernel (NTK) approximation is valid for training a model with the square loss. In the lazy training setting of \cite{chizat2019lazy}, we show that rescaling the model by a factor of $\alpha = O(T)$ suffices for the NTK approximation to be valid until training time $T$. Our bound is tight and improves on the previous bound of \cite{chizat2019lazy}, which required a larger rescaling factor of $\alpha = O(T^2)$.
\end{abstract}

\section{Introduction}

In the modern machine learning paradigm, practitioners train the weights $\bw$ of a large neural network model $f_{\bw} : \R^{d_{in}} \to \R^{d_{out}}$ via a gradient-based optimizer. Theoretical understanding lags behind, since the training dynamics are non-linear and hence difficult to analyze. To address this, \cite{jacot2018neural} proposed an approximation to the dynamics called the \textit{NTK approximation}, and proved it was valid for infinitely-wide networks trained by gradient descent\footnote{Under a specific scaling of the initialization and learning rate as width tends to infinity.}. The NTK approximation has been extremely influential, leading to theoretical explanations for a range of questions, including why deep learning can memorize training data 
\citep{du2018gradient,du2019gradient,allen2019learning,allen2019convergence,arora19fine,yuan2019generalization,lee2019wide}, why neural networks exhibit spectral bias \citep{cao2019towards,basri2020frequency,canatar2021spectral}, and why different architectures generalize differently 
\citep{bietti2019inductive,mei2021learning,wang2021eigenvector}. Nevertheless, in practice the training dynamics of neural networks often diverge from the predictions of the NTK approximation (see, e.g., \cite{arora2019exact}) Therefore, it is of interest to understand exactly under which conditions the NTK approximation holds. In this paper, we ask the following question:
\begin{center}
    \textit{Can we give tight conditions for when the NTK approximation is valid?}
\end{center}

\subsection{The ``lazy training" setting of \cite{chizat2019lazy}}
The work of \cite{chizat2019lazy} showed that the NTK approximation actually holds for training any differentiable model, as long as the model's outputs are \textit{rescaled} so that the model's outputs change by a large amount even when the weights change by a small amount. The correctness of the NTK approximation for infinite-width models is a consequence of this observation, because by the default the model is rescaled as the width tends to infinity; see the related work in Section~\ref{ssec:lit-rev} for more details.

\paragraph{Rescaling the model} Let $h : \R^p \to \cF$ be a smoothly-parameterized model, where $\cF$ is a separable Hilbert space. Let $\alpha > 0$ be a parameter which controls the rescaling of the model and which should be thought of as large. We train the rescaled model $\alpha h$ with gradient flow to minimize a smooth loss function $R : \cF \to \R_+$.\footnote{We use the Hilbert space notation as in \cite{chizat2019lazy}. We can recover the setting of training a neural network $f_{\bw} : \R^d \to \R$ on a finite training dataset $\{(\bx_1,y_1),\ldots,(\bx_n,y_n)\} \subseteq \R^d \times \R$ with empirical loss function $\cL(\bw) = \frac{1}{n} \sum_{i=1}^n \ell(f_{\bw}(\bx_i),y_i)$ as follows. Let $\cH = \R^n$ be the Hilbert space, let $h(\bw) = [f_{\bw}(\bx_1),\ldots, f_{\bw}(\bx_n)]$,  and let $R(\bv) = \frac{1}{n} \sum_{i=1}^n \ell(v_i, y_i)$.} Namely, the weights $\bw(t) \in \R^p$ are initialized at $\bw(0) = \bw_0$ and evolve according to the gradient flow 
\begin{align}\label{eq:grad-flow}
\dd{\bw}{t} = -\frac{1}{\alpha^2} \nabla_{\bw} R(\alpha h(\bw(t)))\,.
\end{align}

\paragraph{NTK approximation} Define the linear approximation of the model around the initial weights $\bw_0$ by
\begin{align}\label{eq:linear-approx}
\bar{h}(\bw) = h(\bw_0) + Dh(\bw_0)(\bw - \bw_0)
\,,\end{align}
where $Dh$ is the first derivative of $h$ in $\bw$. Let $\barbw(t)$ be weights initialized at $\barbw(0) = \bw_0$ that evolve according to the gradient flow from training the rescaled linearized model $\alpha \bar{h}$:
\begin{align}\label{eq:grad-flow-bar}
\frac{d\barbw}{dt} = -\frac{1}{\alpha^2} \nabla_{\barbw} R(\alpha \barh(\barbw(t)))\,.
\end{align}
The NTK approximation states that $$\alpha h(\bw(t)) \approx \alpha \barh(\barbw(t)).$$ In other words, it states that the linearization of the model $h$ is valid throughout training. This allows for much simpler analysis of the training dynamics since the model $\bar{h}$ is linear in its parameters, and so the evolution of $\bar{h}(\bar\bw)$ can be understood via a kernel gradient flow in function space.

\paragraph{When is the NTK approximation valid?}

\cite{chizat2019lazy} proves that if the rescaling parameter $\alpha$ is large, then the NTK approximation is valid. The intuition is that the weights do not need to move far from their initialization in order to change the output of the model significantly, so the linearization \eqref{eq:linear-approx} is valid for longer. Since the weights stay close to initialization, \cite{chizat2019lazy} refer to this regime of training as ``lazy training.'' The following bound is proved.\footnote{See Section~\ref{ssec:lit-rev} for discussion on the other results of \cite{chizat2019lazy}.} Here $$R_0 = R(\alpha h(\bw_0))$$ is the loss at initialization, and $$\kappa = \frac{T}{\alpha}\Lip(Dh)\sqrt{R_0}\,,$$
is a quantity that will also appear in our main results. 
\begin{proposition}[Theorem 2.3 of \cite{chizat2019lazy}]\label{prop:chizat-et-al}
Let $R(y) = \frac{1}{2} \|y - y^*\|^2$ be the square loss, where $y^* \in \cF$ are the target labels. Assume that $h$ is $\Lip(h)$-Lipschitz and that $Dh$ is $\Lip(Dh)$-Lipschitz in a ball of radius $\rho$ around $\bw_0$. Then, for any time $0 \leq T \leq \alpha \rho / (\Lip(h) \sqrt{R_0})$,
\begin{align}\label{eq:chizat-oyallon-bach}\|\alpha h(\bw(T)) - \alpha \bar{h}(\bar{\bw}(T))\| \leq T \Lip(h)^2 \kappa \sqrt{R_0}\,.\end{align}
\end{proposition}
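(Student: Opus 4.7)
The plan is to work in the output space $\cF$ by tracking the residuals $r(t) := \alpha h(\bw(t)) - y^*$ and $\bar r(t) := \alpha \bar h(\bar\bw(t)) - y^*$ and bounding $\Delta(t) := r(t) - \bar r(t)$, which is exactly the quantity we want to control. For the square loss, applying the chain rule to \eqref{eq:grad-flow} yields the kernel gradient flow
\[
\dot r(t) = -K(t)\,r(t), \qquad K(t) := Dh(\bw(t))\,Dh(\bw(t))^\top,
\]
and analogously $\dot{\bar r}(t) = -K_0\,\bar r(t)$ with $K_0 := Dh(\bw_0)\,Dh(\bw_0)^\top$. Subtracting and regrouping gives the inhomogeneous linear ODE
\[
\dot\Delta(t) = -K(t)\,\Delta(t) + (K_0 - K(t))\,\bar r(t),
\]
which drives the whole argument.

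First, I would show that both trajectories $\bw(t)$ and $\bar\bw(t)$ stay inside the ball of radius $\rho$ around $\bw_0$ on $[0,T]$, so that the Lipschitzness of $Dh$ applies. Since $K(t), K_0 \succeq 0$, the residual norms are non-increasing along their respective flows, so $\|r(t)\|, \|\bar r(t)\| \leq \sqrt{2R_0}$. Plugging this into $\dot\bw = -\alpha^{-1}Dh(\bw)^\top r$ (and its linearized analogue) together with $\|Dh\|_{\op} \leq \Lip(h)$ yields $\|\dot\bw\|, \|\dot{\bar\bw}\| \leq \Lip(h)\sqrt{2R_0}/\alpha$. Integration gives $\|\bw(t) - \bw_0\|, \|\bar\bw(t) - \bw_0\| \leq t\,\Lip(h)\sqrt{2R_0}/\alpha$, which the hypothesis $T \leq \alpha\rho/(\Lip(h)\sqrt{R_0})$ makes $\leq \rho$ throughout.

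The heart of the proof is bounding $\Delta$ using the PSD structure. Taking the inner product of the $\Delta$-ODE with $\Delta$, the term $-\langle \Delta, K(t)\Delta\rangle$ is non-positive and drops out, leaving $\tfrac12 \tfrac{d}{dt}\|\Delta\|^2 \leq \|\Delta\|\cdot\|(K_0-K(t))\bar r(t)\|$. Since $\Delta(0)=0$, a standard comparison argument gives
\[
\|\Delta(T)\| \leq \int_0^T \|K_0 - K(t)\|_{\op}\cdot\|\bar r(t)\|\,dt.
\]
A telescoping application of the Lipschitzness of $Dh$ and the bound $\|Dh\|_{\op}\leq\Lip(h)$ then yields $\|K_0 - K(t)\|_{\op} \leq 2\Lip(h)\Lip(Dh)\|\bw(t)-\bw_0\|$. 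Combining with the weight bound from the previous paragraph and $\|\bar r(t)\| \leq \sqrt{2R_0}$, the integrand is of order $t\,\Lip(h)^2\Lip(Dh)R_0/\alpha$, and integrating over $[0,T]$ gives a bound of order $T^2\Lip(h)^2\Lip(Dh)R_0/\alpha = T\Lip(h)^2\kappa\sqrt{R_0}$, as claimed.

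The main obstacle — and the reason the bound comes out polynomial in $T$ rather than exponential — is ensuring that a PSD kernel sits in the homogeneous part of the $\Delta$-ODE so that the dissipation is non-positive and the Gronwall step collapses to a pure integral. A naive variation-of-constants decomposition that failed to exploit this cancellation would produce an $e^{\Lip(h)^2 T}$ factor and a vastly worse bound. Once this structural observation is in place, the remaining ingredients (Lipschitzness of $h$ and $Dh$, PSD monotonicity of the residuals) are routine.
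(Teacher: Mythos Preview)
Your proof is correct and follows essentially the same route as the paper's sketch of the Chizat--Oyallon--Bach argument: derive the kernel flow for the residuals, use positive semidefiniteness to drop the quadratic term in $\tfrac{d}{dt}\|r-\bar r\|^2$, bound $\|K_t-K_0\|$ via $\Lip(h)\Lip(Dh)\|\bw(t)-\bw_0\|$, and combine with the linear-in-$t$ weight drift bound before integrating. The only cosmetic difference is that you group the dynamics as $\dot\Delta = -K_t\Delta + (K_0-K_t)\bar r$ and drop the $K_t$-term, whereas the paper groups it as $\dot\Delta = -K_0\Delta + (K_0-K_t)r$ and drops the $K_0$-term; since both kernels are PSD and both residuals are bounded by $\sqrt{2R_0}$, the two variants yield the same bound up to constants.
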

Notice that as we take the rescaling parameter $\alpha$ to infinity, then $\kappa$ goes to 0, so the right-hand-side of \eqref{eq:chizat-oyallon-bach} is small and the NTK approximation is valid.

\subsection{Our results}
Our contribution is to refine the bound of \cite{chizat2019lazy} for large time scales. We prove:
\begin{theorem}[NTK approximation error bound]\label{thm:ntk-train-error}
Let $R(y) = \frac{1}{2} \|y - y^*\|^2$ be the square loss. Assume that $Dh$ is $\Lip(Dh)$-Lipschitz in a ball of radius $\rho$ around $\bw_0$. Then, at any time $0 \leq T \leq \alpha^2 \rho^2 / R_0$,
\begin{align}\label{eq:main-ntk-bound}\|\alpha h(\bw(T)) - \alpha \bar h(\bar \bw(T))\| \leq \min(6 \kappa \sqrt{R_0} , \sqrt{8R_0})\,.\end{align}
\end{theorem}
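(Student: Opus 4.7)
Introduce the residuals $r(t) \eqndef \alpha h(\bw(t)) - y^*$ and $\bar r(t) \eqndef \alpha \bar h(\bar\bw(t)) - y^*$, so that the two flows take the form
\begin{equation*}
\dot r(t) = -K(\bw(t))\,r(t), \qquad \dot{\bar r}(t) = -K(\bw_0)\,\bar r(t),
\end{equation*}
with $K(\bw) \eqndef Dh(\bw)\,Dh(\bw)^\top$ and $r(0) = \bar r(0)$. Since both kernels are positive semidefinite, $\|r(t)\|$ and $\|\bar r(t)\|$ are non-increasing from $\sqrt{2R_0}$, which immediately yields the trivial branch $\|r(T) - \bar r(T)\| \le 2\sqrt{2R_0} = \sqrt{8R_0}$. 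The energy identity $\alpha^2 \|\dot\bw\|^2 = r^\top K(\bw)\,r = -\tfrac{1}{2}\tfrac{d}{dt}\|r\|^2$ integrates to $\int_0^T \|\dot\bw\|^2\,dt \le R_0/\alpha^2$, and Cauchy-Schwarz then gives $\|\bw(t) - \bw_0\| \le \sqrt{tR_0}/\alpha$; under $T \le \alpha^2\rho^2/R_0$ a routine bootstrap keeps $\bw$ and $\bar\bw$ inside the ball of radius $\rho$ where $Dh$ is $L \eqndef \Lip(Dh)$-Lipschitz.

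For the refined bound, set $E \eqndef r - \bar r$; subtracting the two ODEs gives $\dot E + K(\bw_0) E = (K(\bw_0) - K(\bw(t)))\,r(t)$ with $E(0) = 0$, so Duhamel's formula yields
\begin{equation*}
E(T) = \int_0^T e^{-K(\bw_0)(T - s)}\,(K(\bw_0) - K(\bw(s)))\,r(s)\,ds.
\end{equation*}
Applying the algebraic identity $AA^\top - BB^\top = A(A-B)^\top + (A-B)B^\top$ with $A = Dh(\bw_0)$ and $B = Dh(\bw(s))$ decomposes the integrand as $Dh(\bw_0)\,Z_1(s) + Z_2(s)$, where $Z_1(s) \eqndef (Dh(\bw_0) - Dh(\bw(s)))^\top r(s)$ satisfies $\|Z_1(s)\| \le L\|\bw(s)-\bw_0\|\|r(s)\|$, and $Z_2(s) \eqndef (Dh(\bw_0) - Dh(\bw(s)))\,Dh(\bw(s))^\top r(s) = -\alpha(Dh(\bw_0) - Dh(\bw(s)))\,\dot\bw(s)$ satisfies $\|Z_2(s)\| \le \alpha L\|\bw(s)-\bw_0\|\|\dot\bw(s)\|$.

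The decisive analytic input is the smoothing estimate
\begin{equation*}
\|e^{-K(\bw_0) t}\,Dh(\bw_0)\|_\op \le \frac{1}{\sqrt{2et}},
\end{equation*}
obtained by diagonalizing $K(\bw_0)$ through the singular values of $Dh(\bw_0)$ and maximizing $\sigma \mapsto \sigma e^{-\sigma^2 t}$ at $\sigma = 1/\sqrt{2t}$. Combined with $\|e^{-K(\bw_0) t}\|_\op \le 1$, this gives
\begin{equation*}
\|E(T)\| \le \int_0^T \frac{\|Z_1(s)\|}{\sqrt{2e(T-s)}}\,ds + \int_0^T \|Z_2(s)\|\,ds.
\end{equation*}
Plugging in $\|r(s)\| \le \sqrt{2R_0}$, $\|\bw(s)-\bw_0\| \le \sqrt{sR_0}/\alpha$, and $\int_0^T \sqrt{s/(T-s)}\,ds = \pi T/2$ bounds the first integral by $(\pi/(2\sqrt{e}))\,\kappa\sqrt{R_0}$; Cauchy-Schwarz with $\int_0^T\|\dot\bw\|^2\,dt \le R_0/\alpha^2$ and $\int_0^T\|\bw(s)-\bw_0\|^2\,ds \le T^2R_0/(2\alpha^2)$ bounds the second by $\kappa\sqrt{R_0}/\sqrt{2}$. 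Their sum sits comfortably below $6\kappa\sqrt{R_0}$.

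\emph{Main obstacle.} Only Lipschitzness of $Dh$ is assumed, so there is no a priori control on $\|Dh(\bw_0)\|_\op$ or $\|Dh(\bw(t))\|_\op$, and the naive estimate $\|K(\bw_0) - K(\bw)\|_\op \le (\|Dh(\bw_0)\|_\op + \|Dh(\bw)\|_\op)\,L\|\bw - \bw_0\|$ is therefore unusable. The Duhamel representation is engineered to bypass exactly this: by absorbing one $Dh(\bw_0)$ factor into the heat semigroup $e^{-K(\bw_0)(T-s)}$, the $1/\sqrt{2et}$ smoothing replaces the missing operator-norm control with the integrable $1/\sqrt{T-s}$ singularity in time, allowing the bound to depend only on $L$, $T$, $\alpha$, and $R_0$.
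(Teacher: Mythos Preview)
Your proof is correct and takes a genuinely different, more elementary route than the paper.

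Both arguments hinge on the same analytic ingredient, the semigroup smoothing estimate $\|e^{-AA^\top t}A\|_\op \le C/\sqrt{t}$, but they deploy it very differently. The paper writes the full time-varying dynamics as a product integral $P(T,0)=\prod_0^T e^{-K_s\,ds}$ and proves a general operator-norm bound on $\|\prod_0^T e^{-A_sA_s^\top ds}-\prod_0^T e^{-B_sB_s^\top ds}\|$ (its Theorem~\ref{thm:operator-difference}) via an interpolation parameter $\zeta$ between the two kernels, a derivative formula for product integrals, and a telescoping argument (Claims~\ref{claim:tech-telescope-symm}--\ref{claim:tech-telescope-asymm}) that controls $\|e^{-(X+Y)(X+Y)^\top t}(X+Y)-e^{-XX^\top t}X\|$. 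You instead use a one-sided Duhamel formula with the \emph{fixed} semigroup $e^{-K(\bw_0)t}$, split $K(\bw_0)-K(\bw(s))=A(A-B)^\top+(A-B)B^\top$, absorb the single $A=Dh(\bw_0)$ factor into the semigroup via the smoothing bound, and eliminate the potentially unbounded $B=Dh(\bw(s))$ factor by the identity $B^\top r=-\alpha\dot\bw$ together with the energy estimate $\int_0^T\|\dot\bw\|^2\,dt\le R_0/\alpha^2$. This sidesteps the product-integral machinery entirely and even yields a sharper constant (roughly $\pi/(2\sqrt e)+1/\sqrt2\approx 1.66$ in place of $6$). What the paper's longer route buys is an operator-norm statement (Theorem~\ref{thm:operator-difference}) that is independent of the initial residual and may be of separate interest; your argument bounds only $\|E(T)\|$, which is exactly what Theorem~\ref{thm:ntk-train-error} asks for.
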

Furthermore, the converse is true. Our bound is tight up to a constant factor.

\begin{theorem}[Converse to Theorem~\ref{thm:ntk-train-error}]\label{thm:tight-converse}
For any $\alpha, T, \Lip(Dh)$, and $R_0$, there is a model $h : \R \to \R$, an initialization $w_0 \in \R$, and 
a target $y^* \in \R$ such that, for the risk $R(y) = \frac{1}{2}(y-y^*)^2$, the initial risk is $R(\alpha h(w_0)) = R_0$, the derivative map $Dh$ is $\Lip(Dh)$-Lipschitz, and
\begin{align*}
\|\alpha h(w(T)) - \alpha \barh(\barw(T))\| \geq \min(\frac{1}{5}\kappa \sqrt{R_0}, \frac{1}{5} \sqrt{R_0})\,.
\end{align*}
\end{theorem}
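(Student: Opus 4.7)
The plan is to construct a one-dimensional quadratic model that saturates Theorem~\ref{thm:ntk-train-error}. Setting $L = \Lip(Dh)$, define
\[
h(w) = \frac{w}{\sqrt{T}} + \frac{Lw^2}{2}, \qquad w_0 = 0, \qquad y^* = \sqrt{2R_0}.
\]
Then $D^2 h \equiv L$, so $Dh$ is globally $L$-Lipschitz, and $R(\alpha h(w_0)) = (y^*)^2/2 = R_0$ as required. The linearized model is $\bar h(w) = w/\sqrt{T}$ since $Dh(0) = 1/\sqrt{T}$.

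The key simplification is to track the scalar ``output'' variables $u(t) := \alpha h(w(t))$ and $\bar u(t) := \alpha \bar h(\bar w(t))$ rather than the weights themselves. Differentiating and using the gradient flow equation \eqref{eq:grad-flow} yields $\dot u = -Dh(w)^2\,(u - y^*)$. Crucially, completing the square gives
\[
Dh(w)^2 = \frac{1}{T} + \frac{2Lw}{\sqrt{T}} + L^2 w^2 = \frac{1}{T} + \frac{2Lu}{\alpha},
\]
so $u$ obeys the autonomous separable ODE $\dot u = -(\tfrac{1}{T} + \tfrac{2Lu}{\alpha})(u - y^*)$ with $u(0) = 0$. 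The linearized counterpart has the constant rate $Dh(0)^2 = 1/T$, giving $\bar u(T) = y^*(1 - e^{-1})$. Rescaling time by $s = t/T$ and setting $p := u/y^*$, the nonlinear ODE becomes $dp/ds = (1-p)(1 + 2\sqrt{2}\,\kappa\, p)$, which integrates by partial fractions to
\[
p(T) = \frac{E - 1}{E + 2\sqrt{2}\,\kappa}, \qquad E := e^{1 + 2\sqrt{2}\,\kappa}.
\]

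Thus the gap equals $|\alpha h(w(T)) - \alpha \bar h(\bar w(T))| = y^* \cdot (p(T) - (1 - e^{-1}))$. A Taylor expansion at $\kappa = 0$ gives $\tfrac{2\sqrt{2}}{e^2}\kappa\, y^* \approx 0.54\,\kappa\sqrt{R_0}$, while as $\kappa \to \infty$ the gap tends to $y^*/e \approx 0.52\,\sqrt{R_0}$. The main remaining obstacle is to verify that the coefficient in front of $\min(\kappa, 1)\sqrt{R_0}$ stays above $1/5$ uniformly in $\kappa > 0$, not merely at the limits. This reduces to the one-variable calculus fact that $(p(T) - (1 - e^{-1}))/\kappa$ is monotonically decreasing in $\kappa$ on $(0, 1]$ and $p(T) - (1 - e^{-1})$ is monotonically increasing for $\kappa \geq 1$, both of which can be verified by differentiating the closed-form expression above.
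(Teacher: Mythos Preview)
Your construction is exactly the one the paper uses, but the analyses diverge. The paper works with residuals $r=y^*-\alpha h(w)$ and argues by differential inequalities: first it shows $r(t)\le \bar r(t)$ by comparison, uses this at $t=T/2$ to bound $y^*-r$ from below on $[T/2,T]$, and then integrates the sharpened rate on the second half of the interval. Your route is different and arguably cleaner: you notice the algebraic identity $Dh(w)^2 = 1/T + 2Lu/\alpha$, which turns the dynamics of $u=\alpha h(w)$ into the autonomous logistic-type equation $\dot u = -(1/T + 2Lu/\alpha)(u-y^*)$ and yields the closed form $p(T)=(E-1)/(E+2\sqrt{2}\kappa)$. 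This avoids the two-phase comparison entirely and gives exact constants; the price is the residual calculus check that the gap stays above $\tfrac{1}{5}\min(\kappa,1)\sqrt{R_0}$ for all $\kappa$. That check is genuine but routine: writing $g(\beta)=(E-1)/(E+\beta)-(1-e^{-1})$ with $E=e^{1+\beta}$ and $\beta=2\sqrt{2}\kappa$, one has $g'(\beta)=(E\beta+1)/(E+\beta)^2>0$ (giving monotone increase for $\kappa\ge 1$), and a second differentiation shows $g''(\beta)<0$ on $[0,2\sqrt{2}]$, so $g(\beta)/\beta$ is decreasing there. The paper sidesteps the large-$\kappa$ case by a tacit ``WLOG $\kappa\le 1$'' (one may decrease the quadratic coefficient $b$ to force $\kappa=1$ while keeping $Dh$ $\Lip(Dh)$-Lipschitz as an upper bound); your closed form handles all $\kappa$ directly.
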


\paragraph{Comparison to \cite{chizat2019lazy}}
In contrast to our theorem, the bound \eqref{eq:chizat-oyallon-bach} depends on the Lipschitz constant of $h$, and incurs an extra factor of $T \Lip(h)^2$. So if $\Lip(Dh)$, $\Lip(h)$, and $R_0$ are bounded by constants, our result shows that the NTK approximation (up to $O(\epsilon$) error) is valid for times $T = O(\alpha \epsilon)$, while the previously known bound is valid for $T = O(\sqrt{\alpha \epsilon})$. Since the regime of interest is training for large times $T \gg 1$, our result shows that the NTK approximation holds for much longer time horizons than previously known.

\subsection{Additional related literature}\label{ssec:lit-rev}

\paragraph{Other results of \cite{chizat2019lazy}}

In addition to the bound of Proposition~\ref{prop:chizat-et-al} above, \cite{chizat2019lazy} controls the error in the NTK approximation in two other settings: (a) for general losses, but $\alpha$ must be taken exponential in $T$, and (b) for strongly convex losses and infinite training time $T$, but the problem must be ``well-conditioned.'' We work in the setting of Proposition~\ref{prop:chizat-et-al} instead, since it is more aligned with the situation in practice, where we have long training times and the problem is ill-conditioned. Indeed, the experiments of \cite{chizat2019lazy} report that for convolutional neural networks on CIFAR10 trained in the lazy regime, the problem is ill-conditioned, and training takes a long time to converge.

\paragraph{Other works on the validity of the NTK approximation} The NTK approximation is valid for infinitely-wide neural networks under a certain choice of hyperparameter scaling called the ``NTK parametrization'' \cite{jacot2018neural}. However, there is another choice of hyperparameter scaling, called the ``mean-field parametrization'', under which the NTK approximation is not valid at infinite width \citep{chizat2018global,rotskoff2018neural,sirignano2022mean,mei2018mean,mei2019mean,yang2021tensor}. It was observed by \cite{chizat2019lazy} that one can interpolate between the ``NTK parametrization'' and the ``mean-field parametrization'' by varying the lazy training parameter $\alpha$. This inspired the works 
\cite{woodworth2020kernel,geiger2020disentangling,geiger2021landscape}, which study the effect of interpolating between lazy and non-lazy training by varying $\alpha$. Most work points towards provable 
benefits of non-lazy training \citep{allen2019can,bai2019beyond,bai2020taylorized,chen2020towards,nichani2022identifying,ghorbani2020neural,malach2021quantifying,abbe2022merged,abbe2023sgd,mousavi2022neural,damian2022neural,bietti2022learning,ba2022high} although interestingly there are settings where lazy training provably outperforms non-lazy training
\citep{petrini2022learning}.

Finally, our results do not apply to ReLU activations because we require twice-differentiability of the model as in \cite{chizat2019lazy}. It is an interesting future direction to prove such an extension. One promising approach could be to adapt a technique of \cite{cao2020generalization}, which analyzes ReLU network training in the NTK regime by showing in Lemma~5.2 that around initialization the model is ``almost'' linear and ``almost'' smooth, even though these assumptions are not strictly met because of the ReLU activations.

\section{Application to neural networks}\label{sec:network-application}

The bound in Theorem~\ref{thm:ntk-train-error} applies to lazy training of any differentiable model. As a concrete example, we describe its application to neural networks (a similar application was presented in \cite{chizat2019lazy}). We parametrize the networks in the mean-field regime, so that the NTK approximation is not valid even as the width tends to infinity. Therefore, the NTK approximation is valid only when we train with lazy training.

Let $f_{\bw} : \R^d \to \R$ be a 2-layer network of width $m$ in the mean-field parametrization 
\cite{chizat2018global,rotskoff2018neural,sirignano2022mean,mei2018mean,mei2019mean}, with activation function $\sigma : \R \to \R$,
\begin{align*}
f_{\bw}(\bx) = \frac{1}{\sqrt{m}} \sum_{i=1}^m a_i \sigma(\sqrt{m}\<\bx_i, \bu_i\>)\,.
\end{align*}
The weights are $\bw = (\ba, \bU)$ for $\ba = [a_1,\ldots,a_m]$ and $\bU = [\bu_1,\ldots,\bu_m]$. These are initialized at $\bw_0$ with i.i.d. $\mathrm{Unif}[-1/\sqrt{m},1/\sqrt{m}]$ entries. Given training data $(\bx_1,y_1),\ldots,(\bx_n,y_n)$, we train the weights of the network with the mean-squared loss
\begin{align}\label{eq:standard-notation-2-layer-training}
\cL(\bw) = \frac{1}{n}\sum_{i=1}^n \ell(f_{\bw}(\bx_i),y_i), \quad \ell(a,b) = \frac{1}{2}(a-b)^2\,.
\end{align}
 In the Hilbert space notation, we let $\cH = \R^n$, so that the gradient flow training dynamics with loss \eqref{eq:standard-notation-2-layer-training} correspond to the gradient flow dynamics \eqref{eq:grad-flow} with the following model and loss function
\begin{align*}
h(\bw) = \frac{1}{\sqrt{n}} [f_{\bw}(\bx_1),\ldots,f_{\bw}(\bx_n)] \in \R^n, \quad R(\bv) = \frac{1}{2}\|\bv - \frac{\by}{\sqrt{n}}\|^2\,.
\end{align*}
Under some regularity assumptions on the activation function (which are satisfied, for example, by the sigmoid function) and some bound on the weights, it holds that $\Lip(Dh)$ is bounded.
\begin{restatable}[Bound on $\Lip(Dh)$ for mean-field 2-layer network]{lemma}{lipdhtwolayer}\label{lem:lip-dh-two-layer}
Suppose that there is a constant $K$ such that (i) the activation function $\sigma$ is bounded and has bounded derivatives $\|\sigma\|_{\infty}, \|\sigma'\|_{\infty}, \|\sigma''\|_{\infty}, \|\sigma'''\|_{\infty} \leq K$, (ii) the weights have bounded norm $\|\ba\| + \|\bU\| \leq K$, and (iii) the data points have bounded norm $\max_i \|\bx_i\| \leq K$. Then there is a constant $K'$ depending only $K$ such that
$$\Lip(Dh) \leq K'.$$
\end{restatable}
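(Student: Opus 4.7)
The plan is to reduce the Lipschitz bound on $Dh$ to a uniform pointwise bound on the operator norm of the parameter Hessian $\nabla^2_\bw f_\bw(\bx)$. Since $h_q(\bw) = f_\bw(\bx_q)/\sqrt n$, for any unit $\bu,\bv \in \R^p$ one has
\[
\|D^2 h(\bw)[\bu,\bv]\|_{\R^n}^2 \;=\; \frac{1}{n}\sum_{q=1}^n \bigl(\bu^\top\nabla^2_\bw f_\bw(\bx_q)\,\bv\bigr)^2,
\]
so any pointwise estimate $\sup \|\nabla^2_\bw f_\bw(\bx)\|_{op}\leq C$ over the allowed range of $(\bw,\bx)$ yields $\Lip(Dh)\leq C$ by the mean value inequality. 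The task therefore reduces to controlling the scalar-valued Hessian uniformly.

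Next I would carry out a direct chain-rule computation of the Hessian blocks of $f_\bw(\bx)$ in $\bw=(\ba,\bU)$. Writing $z_i := \sqrt m\<\bx,\bu_i\>$, the only nonzero blocks are the cross-block $\partial^2 f/(\partial a_i\,\partial\bu_j) = \delta_{ij}\sigma'(z_i)\bx^\top$ and the block-diagonal $\bU\bU$-block $\partial^2 f/(\partial\bu_i\,\partial\bu_j) = \delta_{ij}\sqrt m\,a_i\sigma''(z_i)\bx\bx^\top$. Evaluating the quadratic form on a unit $\bv=(\balpha,\bbeta_1,\dots,\bbeta_m)$ gives
\[
\bv^\top\nabla^2_\bw f_\bw(\bx)\,\bv \;=\; 2\sum_i\alpha_i\sigma'(z_i)\<\bx,\bbeta_i\> \;+\; \sqrt m\sum_i a_i\sigma''(z_i)\<\bx,\bbeta_i\>^2.
\]
The first sum is $O(K^2)$ by Cauchy--Schwarz using $\|\sigma'\|_\infty,\|\bx\|\leq K$ together with $\sum_i\|\bbeta_i\|^2\leq 1$. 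The arguments $z_i$ stay bounded because $|z_i|\leq\sqrt m\|\bx\|\|\bu_i\|$ and $\sqrt m\|\bu_i\|$ is controlled by the hypothesis on $\bU$, which in this mean-field parametrization bounds the per-neuron scale.

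The main obstacle --- and really the only delicate step --- is the $\sqrt m$ prefactor in the second sum: the naive estimate $\sqrt m\,K^3\max_i|a_i|$ is not bounded uniformly in $m$ from $\|\ba\|_2\leq K$ alone. The resolution is to exploit the mean-field content of the hypothesis $\|\ba\|\leq K$, namely that the per-neuron scale $\sqrt m|a_i|$ is $O(K)$, consistent with the initialization $a_i\sim\mathrm{Unif}[-1/\sqrt m,1/\sqrt m]$. Granting this per-coordinate bound, $\sum_i\sqrt m|a_i|\|\bbeta_i\|^2 \leq K\sum_i\|\bbeta_i\|^2\leq K$, so the second sum is $O(K^4)$. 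Combining, $\|\nabla^2_\bw f_\bw(\bx)\|_{op}\leq K'$ for some $K' = O(K^4)$ depending only on $K$, which gives $\Lip(Dh)\leq K'$ as claimed; the hypothesis $\|\sigma'''\|_\infty\leq K$ is not needed for this step but would enter if one instead wanted a Lipschitz bound on $\nabla^2 h$ itself.
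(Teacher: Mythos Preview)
Your approach --- reduce to a pointwise bound on the parameter Hessian of $f_{\bw}(\bx)$ and compute it block by block --- is exactly what the paper does. The paper splits $Df_{\bw}(\bx_i)$ into its $\ba$- and $\bU$-parts, bounds the Lipschitz constant of each, and arrives at
\[
\Lip(Dh)\;\le\;\frac{2\|\sigma'\|_\infty\|\bx_i\|}{\sqrt m}\;+\;\|\sigma''\|_\infty\,\|\bx_i\|^2\,\|\ba\|_\infty,
\]
which is $\le K'(K)$ because $\|\ba\|_\infty\le\|\ba\|_2\le K$. Your Hessian blocks and the quadratic-form expression are correct.

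The gap is in your resolution of the $\sqrt m$ factor. The stated hypothesis $\|\ba\|+\|\bU\|\le K$, read as a Euclidean bound, does \emph{not} imply $\sqrt m\,|a_i|=O(K)$: take $a_1=K$ and $a_2=\cdots=a_m=0$, whereupon your $\bu_1\bu_1$-block has norm $\sqrt m\,K\,|\sigma''(z_1)|\,\|\bx\|^2$, unbounded in $m$. So ``granting this per-coordinate bound'' is an additional assumption, not a consequence of~(ii); the appeal to the initialization law is irrelevant to a lemma stated for all $\bw$ in the ball. (The same objection applies to your remark that $z_i$ is bounded, but that remark is harmless since you only use $\|\sigma'\|_\infty,\|\sigma''\|_\infty$.) The reason the paper's computation avoids the $\sqrt m$ is that the prefactors $\tfrac{1}{m}$ and $\tfrac{1}{\sqrt m}$ it carries on the $\ba$- and $\bU$-derivative blocks match the network $f_{\bw}(\bx)=\tfrac{1}{m}\sum_i a_i\sigma(\sqrt m\langle\bx,\bu_i\rangle)$ rather than the displayed $\tfrac{1}{\sqrt m}$ version; under that $1/m$ scaling your $\bU\bU$-block becomes $a_i\sigma''(z_i)\bx\bx^\top$ with norm at most $\|\ba\|_\infty\|\sigma''\|_\infty\|\bx\|^2\le K^4$, and then $\|\ba\|_\infty\le\|\ba\|_2\le K$ suffices with no per-coordinate hypothesis. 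In short: you correctly diagnosed the obstruction for the model as written, but your fix is not justified by the hypotheses; the paper's proof and final bound are consistent with the standard $1/m$ mean-field normalization, which is evidently what was intended.
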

\begin{proof}
See Appendix~\ref{app:deferred-network-application}.
\end{proof}

Note that our bounds hold at any finite width of the neural network, because we have taken initialization uniformly bounded in the interval $[-1/\sqrt{m},1/\sqrt{m}]$. Since the assumptions of Theorem~\ref{thm:ntk-train-error} are met, we obtain the following corollary for the lazy training dynamics of the 2-layer mean-field network.

\begin{corollary}[Lazy training of 2-layer mean-field network]\label{cor:mean-field-2-layer}
Suppose that the conditions of Lemma~\ref{lem:lip-dh-two-layer}, and also that the labels are bounded in norm $\|\by\| \leq \sqrt{nK}$. Then there are constants $c,C > 0$ depending only on $K$ such that for any time $0 \leq T \leq c \alpha^2$,
\begin{align*}
\|\alpha h(\bw(T)) - \alpha \bar{h}(\bar{\bw}(T))\| \leq C \min(T/\alpha,1) \,.
\end{align*}
\end{corollary}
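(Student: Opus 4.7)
The plan is to invoke Theorem~\ref{thm:ntk-train-error} for the 2-layer mean-field network, using Lemma~\ref{lem:lip-dh-two-layer} to bound $\Lip(Dh)$ and elementary estimates to bound the initial risk $R_0$. The argument reduces to three verifications: (i)~$Dh$ is Lipschitz on a ball of constant radius around $\bw_0$, (ii)~$R_0$ is bounded by a constant depending only on $K$, and (iii)~the bound from Theorem~\ref{thm:ntk-train-error} simplifies to $C\min(T/\alpha,1)$.

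For (i), since $\bw_0$ has entries in $[-1/\sqrt m, 1/\sqrt m]$, one has $\|\ba_0\|_2 \leq 1$ and $\|\bU_0\|_F \leq \sqrt d$, both independent of $m$. Enlarging $K$ in terms of itself to dominate these quantities, one fixes a constant radius $\rho>0$ such that every $\bw\in B(\bw_0,\rho)$ satisfies $\|\ba\|+\|\bU\|\leq K$; Lemma~\ref{lem:lip-dh-two-layer} then gives $\Lip(Dh)\leq K'$ on this ball for some $K'$ depending only on $K$.

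For (ii), $\|\by/\sqrt n\|\leq\sqrt K$ is immediate from $\|\by\|\leq\sqrt{nK}$, and Cauchy--Schwarz gives $|f_{\bw_0}(\bx_i)|\leq\|\sigma\|_\infty\|\ba_0\|_2\leq K$, so $\|h(\bw_0)\|\leq K$; taken together these yield $R_0\leq c_0$ for some $c_0$ depending only on $K$. This is the step carrying the main subtlety: the deterministic estimate $\|\alpha h(\bw_0)\|\leq \alpha K$ alone would make $R_0$ grow with $\alpha$, so to obtain a truly $\alpha$-free constant one must either appeal to an antisymmetric (doubled) initialization forcing $h(\bw_0)=0$ identically, or invoke a concentration argument giving $\|h(\bw_0)\|=O(1/\sqrt m)$ with high probability under the uniform initialization; either suffices to make $c_0$ depend only on $K$.

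For (iii), with $\Lip(Dh)\leq K'$ on $B(\bw_0,\rho)$ and $R_0\leq c_0$, the time constraint $T\leq\alpha^2\rho^2/R_0$ of Theorem~\ref{thm:ntk-train-error} is implied by $T\leq c\alpha^2$ with $c:=\rho^2/c_0$. Using $\kappa\sqrt{R_0}=T\Lip(Dh)R_0/\alpha\leq K'c_0\,T/\alpha$ and $\sqrt{8R_0}\leq\sqrt{8c_0}$, the conclusion of Theorem~\ref{thm:ntk-train-error} becomes
\begin{align*}
\|\alpha h(\bw(T))-\alpha\bar h(\bar\bw(T))\| \leq \min\!\left(6\kappa\sqrt{R_0},\,\sqrt{8R_0}\right) \leq \min\!\left(6K'c_0\tfrac{T}{\alpha},\,\sqrt{8c_0}\right) \leq C\min\!\left(\tfrac{T}{\alpha},1\right),
\end{align*}
with $C:=\max(6K'c_0,\sqrt{8c_0})$ depending only on $K$, as desired. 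Apart from the bound on $R_0$, the argument is a routine substitution that packages Lemma~\ref{lem:lip-dh-two-layer} and Theorem~\ref{thm:ntk-train-error} into the stated form.
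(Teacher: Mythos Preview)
Your approach matches the paper's: the corollary is presented there without a standalone proof, simply as an immediate application of Theorem~\ref{thm:ntk-train-error} once $\Lip(Dh)$ is bounded via Lemma~\ref{lem:lip-dh-two-layer}, and your steps (i)--(iii) fill in precisely that verification (the substitution in (iii) is exactly the intended one).

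You are in fact more careful than the paper in flagging the $R_0$ issue. The paper does not address why $R_0=\tfrac12\|\alpha h(\bw_0)-\by/\sqrt n\|^2$ should be bounded independently of $\alpha$, and your observation that the deterministic bound $\|h(\bw_0)\|\le K$ only gives $R_0=O(\alpha^2)$ is correct. Of your two proposed fixes, the antisymmetric (doubled) initialization works for all $\alpha$ but changes the stated initialization; the concentration route gives $\|h(\bw_0)\|=O(1/\sqrt m)$ with high probability, which controls $R_0$ only when $\alpha=O(\sqrt m)$---and that is indeed the regime the paper invokes immediately after the corollary (taking $\alpha=\sqrt m$ for the NTK parametrization). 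So this is a gap in the paper's exposition that you have correctly identified, not a defect in your argument.
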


Notice that training in the NTK parametrization corresponds to training the model $\sqrt{m} f_{\bw}$, where $f_{\bw}$ is the network in the mean-field parametrization. This amounts to taking the lazy training parameter $\alpha = \sqrt{m}$ in the mean-field setting. Therefore, under the NTK parametrization with width $m$, the bound in Corollary~\ref{cor:mean-field-2-layer} shows that the NTK approximation is valid until training time $O(m)$ and the error bound is $O(T/\sqrt{m})$.

\section{Proof ideas}
\subsection{Proof ideas for Theorem~\ref{thm:ntk-train-error}}

\paragraph{Proof of \cite{chizat2019lazy}} In order to give intuition for our proof, we first explain the idea behind the proof in \cite{chizat2019lazy}. Define residuals $r(t), \bar{r}(t) \in \cF$ under training the original rescaled model and the linearized rescaled model as $r(t) = y^* - \alpha h(\bw(t))$ and $\bar{r}(t) = y^* - \alpha \barh(\barbw(t))$. It is well known that these evolve according to
\begin{align*}
\frac{dr}{dt} = -K_t r \quad \mbox{ and } \quad \frac{d\bar{r}}{dt} = -K_0 \bar{r}\,,
\end{align*}
for the time-dependent kernel $K_t : \cF \to \cF$ which is the linear operator given by $K_t := Dh(\bw(t)) Dh(\bw(t))^{\top}$. To compare these trajectories, \cite{chizat2019lazy} observes that, since $K_0$ is p.s.d.,
\begin{align*}
\frac{1}{2} \frac{d}{dt} \|r - \bar{r}\|^2 = -\<r - \bar{r}, K_t r - K_0 \bar{r}\> \leq -\<r - \bar{r}, (K_t - K_0) r\>\,,
\end{align*}
which, dividing both sides by $\|r - \bar{r}\|$ and using that $\|r\| \leq \sqrt{R_0}$ implies 
\begin{align}\label{eq:chizat-bound-on-residual-change}
\frac{d}{dt} \|r - \bar{r}\| \leq \|K_t - K_0\|\|r\| \leq 2\Lip(h) \Lip(Dh) \|\bw - \bw_0\| \sqrt{R_0}\,.
\end{align}
Using the Lipschitzness of the model, \cite{chizat2019lazy} furthermore proves that the weight change is bounded by $\|\bw(t) - \bw_0\| \leq t \sqrt{R_0} \Lip(h) / \alpha$. Plugging this into \eqref{eq:chizat-bound-on-residual-change} yields the bound in Proposition~\ref{prop:chizat-et-al},
\begin{align*}
\|\alpha h(\bw(T)) - \alpha \barh(\barbw(T))\| = \|r(T) - \bar{r}(T)\| &\leq 2\Lip(h)^2\Lip(Dh)R_0 \alpha^{-1} \int_{0}^T t  dt \\
&= T^2 \Lip(h)^2 \Lip(Dh) R_0 / \alpha\,.
\end{align*}

\paragraph{First attempt: strengthening of the bound for long time horizons} We show how to strengthen this bound to hold for longer time horizons by using an improved bound on the movement of the weights. Consider the following bound on the weight change.

\begin{proposition}[Bound on weight change, implicit in proof of Theorem 2.2 in \cite{chizat2019lazy}]\label{prop:sqrtTweightchange}\label{prop:sqrtTweightchange-stronger}
\begin{align}\label{eq:weight-bound-movement-cauchy-schwarz}
\|\bw(T) - \bw_0\| \leq \sqrt{TR_0} / \alpha \quad \mbox{ and } \quad \|\bar\bw(T) - \bw_0\| &\leq \sqrt{TR_0} /\alpha\,.
\end{align}
\end{proposition}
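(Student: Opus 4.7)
The plan is to use the standard energy identity for gradient flow: the loss decreases at a rate equal to (a multiple of) the squared norm of the weight velocity, so by Cauchy–Schwarz the total path length of the weights is bounded in terms of the drop in the loss.

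Concretely, I would start by computing the time-derivative of the loss along the trajectory. For the square loss $R(y) = \tfrac12 \|y-y^*\|^2$, the gradient flow \eqref{eq:grad-flow} can be rewritten as
\begin{align*}
\dot\bw(t) \;=\; -\frac{1}{\alpha^2}\,\nabla_\bw R(\alpha h(\bw(t))) \;=\; \frac{1}{\alpha}\, Dh(\bw(t))^\top r(t),
\end{align*}
where $r(t) = y^* - \alpha h(\bw(t))$. Then by the chain rule,
\begin{align*}
\frac{d}{dt} R(\alpha h(\bw(t))) \;=\; \bigl\langle -r(t),\, \alpha\, Dh(\bw(t))\, \dot\bw(t) \bigr\rangle \;=\; -\bigl\|Dh(\bw(t))^\top r(t)\bigr\|^2 \;=\; -\alpha^2 \|\dot\bw(t)\|^2.
\end{align*}
This is the key energy identity. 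Integrating from $0$ to $T$, using that the loss is non-negative, gives
\begin{align*}
\int_0^T \|\dot\bw(t)\|^2\, dt \;\le\; \frac{R_0 - R(\alpha h(\bw(T)))}{\alpha^2} \;\le\; \frac{R_0}{\alpha^2}.
\end{align*}

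Then the bound on $\|\bw(T) - \bw_0\|$ follows from Cauchy–Schwarz applied to the path integral:
\begin{align*}
\|\bw(T) - \bw_0\| \;\le\; \int_0^T \|\dot\bw(t)\|\, dt \;\le\; \sqrt{T \int_0^T \|\dot\bw(t)\|^2\, dt} \;\le\; \frac{\sqrt{T R_0}}{\alpha}.
\end{align*}
For the linearized model, the identical argument applies: the flow \eqref{eq:grad-flow-bar} reads $\dot{\bar\bw}(t) = \frac{1}{\alpha} Dh(\bw_0)^\top \bar r(t)$, and the same computation yields $\frac{d}{dt} R(\alpha \bar h(\bar\bw(t))) = -\alpha^2 \|\dot{\bar\bw}(t)\|^2$, so the Cauchy–Schwarz step gives $\|\bar\bw(T)-\bw_0\| \le \sqrt{TR_0}/\alpha$.

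There is no real obstacle here — the only thing to be careful about is the prefactor $1/\alpha^2$ in the gradient flow, which is exactly what cancels the $\alpha^2$ from differentiating the square loss to produce the clean bound $R_0/\alpha^2$ on $\int \|\dot\bw\|^2 dt$. The same computation goes through verbatim for the linearized flow because $D\bar h(\bar\bw) = Dh(\bw_0)$ is constant in $\bar\bw$, so no linearization error is incurred at this step.
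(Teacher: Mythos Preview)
Your proof is correct and is essentially the same as the paper's: both use the energy identity $\|\dot\bw\|^2 = -\alpha^{-2}\frac{d}{dt}R(\alpha h(\bw))$ for the gradient flow, integrate it using nonnegativity of $R$, and then apply Cauchy--Schwarz to the path length. The only difference is that you unpack the identity explicitly via the residual $r(t)$, while the paper states it in one line.
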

\begin{proof}[Proof of Proposition~\ref{prop:sqrtTweightchange}]
By (a) Cauchy-Schwarz, and (b) the nonnegativity of the loss $R$,
\begin{align*}
\|\bw(T) - \bw(0)\| \leq \int_{0}^{T} \|\frac{d\bw}{dt}\|dt \stackrel{(a)}{\leq} \sqrt{T \int_{0}^{T} \|\frac{d\bw}{dt}\|^2 dt} &= \sqrt{-\frac{T}{\alpha^2}\int_{0}^{T} \frac{d}{dt} R(\alpha h(\bw(t))) dt} \\
&\stackrel{(b)}{\leq} \sqrt{T R_0} / \alpha\, .
\end{align*}
The bound for $\bar\bw$ is analogous.
\end{proof}

This bound \eqref{eq:weight-bound-movement-cauchy-schwarz} has the benefit of $\sqrt{t}$ dependence (instead of linear $t$ dependence), and also does not depend on $\Lip(h)$. So if we plug it into \eqref{eq:chizat-bound-on-residual-change}, we obtain
\begin{align*}
\|\alpha h(\bw(T)) - \alpha \bar{h}(\bar\bw(T))\| &\leq 2\Lip(h) \Lip(Dh) R_0 \alpha^{-1} \int_{0}^T \sqrt{t}  dt \\
&= \frac{4}{3} T^{3/2} \Lip(h) \Lip(Dh) R_0 / \alpha\,.
\end{align*}
This improves over Proposition~\ref{prop:chizat-et-al} for long time horizons since the time dependence scales as $T^{3/2}$ instead of as $T^2$. However, it still depends on the Lipschitz constant $\Lip(h)$ of $h$, and it also falls short of the linear in $T$ dependence of Theorem~\ref{thm:ntk-train-error}.

\paragraph{Second attempt: new approach to prove Theorem~\ref{thm:ntk-train-error}}
In order to avoid dependence on $\Lip(h)$ and obtain a linear dependence in $T$, we develop a new approach. We cannot use \eqref{eq:chizat-bound-on-residual-change}, which was the core of the proof in \cite{chizat2019lazy}, since it depends on $\Lip(h)$. Furthermore, in order to achieve linear $T$ dependence using \eqref{eq:chizat-bound-on-residual-change}, we would need that $\|\bw - \bw_0\| = O(1)$ for a constant that does not depend on the time horizon, which is not true unless the problem is well-conditioned.

In the full proof in Appendix~\ref{app:main-proof}, we bound $\|r(T) - \bar{r}(T)\| = \|\alpha h(\bw(T)) - \alpha\bar{h}(\barbw(T))\|$, which requires working with a product integral formulation of the dynamics of $r$ to handle the time-varying kernels $K_t$ \citep{dollard_friedman_1984}. The main technical innovation  in the proof is Theorem~\ref{thm:operator-difference}, which is a new, general bound on the difference between product integrals.

To avoid the technical complications of the appendix, we provide some intuitions here by providing a proof of a simplified theorem 
which does not imply the main result. We show:
\begin{theorem}[Simplified variant of Theorem~\ref{thm:ntk-train-error}]\label{thm:simplified}
Consider $r'(t) \in \cF$ which is initialized as $r'(0) = r(0)$ and evolves as 
$\frac{dr'}{dt} = -K_T r'$. Then,
\begin{align}\label{eq:intuition-final-bound}
\|r'(T) - \bar{r}(T)\| \leq \min(3\kappa\sqrt{R_0}, \sqrt{8R_0})\,.
\end{align}
\end{theorem}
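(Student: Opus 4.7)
Because $K_T$ and $K_0$ are time-independent in the two dynamics in question, I would immediately write the closed forms $r'(T) = e^{-TK_T} r(0)$ and $\bar r(T) = e^{-TK_0} r(0)$, noting that $\|r(0)\| = \sqrt{2R_0}$. The bound $\sqrt{8R_0}$ then drops out of the triangle inequality combined with the fact that $e^{-tK}$ is a contraction for PSD $K$. All the real work is in proving $\|r'(T) - \bar r(T)\| \leq 3\kappa\sqrt{R_0}$.

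\paragraph{The Duhamel interpolation.}
I would compare the two propagators via the standard Duhamel identity
\begin{align*}
r'(T) - \bar r(T) = \int_0^T e^{-(T-s)K_0}(K_0 - K_T)\, r'(s)\, ds.
\end{align*}
Setting $U = Dh(\bw_0)$, $V = Dh(\bw(T))$, and $\Delta = V - U$, the crucial algebraic step is the \emph{asymmetric} factorization
$K_T - K_0 = VV^\top - UU^\top = U\Delta^\top + \Delta V^\top$,
verified by direct expansion. The hypothesis $T \leq \alpha^2\rho^2/R_0$ combined with Proposition~\ref{prop:sqrtTweightchange} keeps $\bw(T)$ inside the ball on which $Dh$ is Lipschitz and yields $\|\Delta\|\sqrt{T} \leq \Lip(Dh)\sqrt{T}\cdot\sqrt{TR_0}/\alpha = \kappa$.

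\paragraph{Controlling the two resulting terms.}
For the $U\Delta^\top$ piece I would use the commutation identity $e^{-(T-s)K_0}\,U = U\,e^{-(T-s)U^\top U}$ together with the elementary spectral inequality $\|U e^{-\tau U^\top U}\| \leq 1/\sqrt{2e\tau}$, obtained by maximizing $xe^{-\tau x^2}$ over $x \geq 0$. Combined with the trivial contraction bound $\|r'(s)\| \leq \sqrt{2R_0}$, this produces an integrable $1/\sqrt{T-s}$ singularity whose integral is $O(\sqrt{T})$, and the piece contributes at most $(2/\sqrt{e})\,\kappa\sqrt{R_0}$. For the $\Delta V^\top$ piece I would invoke the dissipation identity $\frac{d}{ds}\tfrac12\|r'(s)\|^2 = -\|V^\top r'(s)\|^2$, which integrates to $\int_0^T \|V^\top r'(s)\|^2\, ds \leq R_0$; Cauchy--Schwarz then bounds this term by $\|\Delta\|\sqrt{TR_0} \leq \kappa\sqrt{R_0}$. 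Adding the two contributions gives $(1 + 2/\sqrt{e})\kappa\sqrt{R_0} < 3\kappa\sqrt{R_0}$.

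\paragraph{Main obstacle.}
The naive route of bounding $\|K_T - K_0\|$ and pulling it outside the integral forces $\|U\| = \|Dh(\bw_0)\|$ into the estimate, which is exactly the $\Lip(h)$ dependence the theorem is meant to remove. The essential trick is spotting the right asymmetric decomposition, which places $U$ adjacent to the $e^{-(T-s)K_0}$ factor so that the spectral smoothing of $Ue^{-\tau U^\top U}$ absorbs $\|U\|$ at the modest cost of an integrable $1/\sqrt{T-s}$ singularity. Once this observation is in hand, the remaining term is controlled by the standard energy identity for $r'$.
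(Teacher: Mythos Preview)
Your argument is correct, but it follows a genuinely different route from the paper's. The paper proves an operator-norm lemma: writing $A = Dh(\bw_0)^\top$ and $B = Dh(\bw(T))^\top - Dh(\bw_0)^\top$, it shows $\|e^{-(A+B)^\top(A+B)t} - e^{-A^\top A t}\| \leq 2\|B\|\sqrt{t}$ by interpolating $Z(\zeta) = -(A+\zeta B)^\top(A+\zeta B)t$ in a fictitious parameter $\zeta \in [0,1]$ and bounding $\sup_\zeta \|\frac{d}{d\zeta} e^{Z(\zeta)}\|$ via the integral representation of the derivative of the exponential map. The spectral smoothing bound $\|e^{(1-\tau)Z(\zeta)}(A+\zeta B)^\top\| \lesssim 1/\sqrt{(1-\tau)t}$ plays the same role there that your bound $\|e^{-(T-s)K_0}U\| \leq 1/\sqrt{2e(T-s)}$ plays here.

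By contrast, you interpolate in \emph{time} via Duhamel, split $K_T-K_0$ asymmetrically as $U\Delta^\top + \Delta V^\top$, and handle the two pieces with different tools: spectral smoothing for the first, the energy dissipation identity $\int_0^T \|V^\top r'(s)\|^2\,ds \leq R_0$ for the second. This is more elementary (no $\zeta$-derivative-of-exponential formula) and in fact yields a slightly better constant, $(1+2/\sqrt{e}) \approx 2.21$ versus the paper's $2\sqrt{2} \approx 2.83$. What your approach gives up is the standalone operator-norm bound on $e^{-TK_T} - e^{-TK_0}$; the paper proves that stronger statement deliberately, since its $\zeta$-interpolation technique is exactly the template later generalized to time-varying kernels via product integrals in the proof of the full Theorem~\ref{thm:ntk-train-error}.
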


It is at the final time $t = T$ that the kernel $K_t$ can differ the most from $K_0$. So, intuitively, if we can prove in Theorem~\ref{thm:simplified} that $r'(T)$ and $\bar{r}(T)$ are close, then the same should be true for $r(T)$ and $\bar{r}(T)$ as in Theorem~\ref{thm:ntk-train-error}.  For convenience, define the operators $$A = Dh(\bw_0)^{\top} \quad \mbox{and} \quad B = Dh(\bw(T))^{\top} - Dh(\bw_0)^{\top}\,.$$ 
Since the kernels do not vary in time, the closed-form solution is
\begin{align*}
r'(t) = e^{-(A+B)^{\top}(A+B) t} r(0) \quad \mbox{ and } \quad \bar{r}(t) = e^{-A^{\top} A t} r(0)
\end{align*}
We prove that the time evolution operators for $r'$ and $\bar{r}$ are close in operator norm.
\begin{lemma}\label{lem:intuition}
For any $t \geq 0$, we have $\|e^{-(A+B)^{\top}(A+B) t} - e^{-A^{\top}A t}\| \leq 2 \|B\| \sqrt{t}$.
\end{lemma}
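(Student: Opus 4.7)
The plan is to apply Duhamel's formula to expand the difference of exponentials as an integral, and then exploit the algebraic structure of the difference of generators together with a spectral ``smoothing'' bound for $Ce^{-sC^\top C}$. Writing $M = A^\top A$ and $N = (A+B)^\top(A+B)$, both p.s.d. operators, Duhamel gives
\[
e^{-Nt} - e^{-Mt} = -\int_0^t e^{-N(t-s)} (N-M)\, e^{-Ms}\, ds.
\]
The naive bound $\|N-M\| = O(\|A\|\|B\|)$ pulled out of the integral would yield $t$-dependence and, worse, would reintroduce a factor $\|A\| = \|Dh(\bw_0)\|$ that the statement manages to avoid. The key observation is the decomposition
\[
N - M = (A+B)^\top B + B^\top A,
\]
which pairs each surviving occurrence of $A+B$ (resp.\ $A$) with one of the two matrix exponentials.

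Next, I would establish the following spectral smoothing lemma: for any bounded operator $C$ and $s > 0$,
\[
\bigl\|C e^{-s C^\top C}\bigr\| = \bigl\|e^{-s C^\top C} C^\top\bigr\| \le \frac{1}{\sqrt{2 e s}}.
\]
This follows immediately from the SVD of $C$: both norms equal $\max_\sigma \sigma \, e^{-s \sigma^2}$, and the scalar function $\sigma \mapsto \sigma e^{-s \sigma^2}$ attains its maximum at $\sigma = 1/\sqrt{2s}$ with value $1/\sqrt{2es}$. Applying this with $C = A+B$ for the first piece and with $C = A$ for the second, and using that $\|e^{-Ms}\|, \|e^{-N(t-s)}\| \leq 1$, I get
\begin{align*}
\bigl\|e^{-N(t-s)} (A+B)^\top B \, e^{-Ms}\bigr\| &\le \bigl\|e^{-N(t-s)} (A+B)^\top\bigr\|\,\|B\| \le \tfrac{\|B\|}{\sqrt{2e(t-s)}}, \\
\bigl\|e^{-N(t-s)} B^\top A \, e^{-Ms}\bigr\| &\le \|B\|\, \bigl\|A\, e^{-Ms}\bigr\| \le \tfrac{\|B\|}{\sqrt{2e s}}.
\end{align*}

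Summing these two contributions and integrating over $s \in [0,t]$ using $\int_0^t s^{-1/2}\, ds = 2\sqrt{t}$ yields
\[
\bigl\|e^{-Nt} - e^{-Mt}\bigr\| \le 2\sqrt{\tfrac{2}{e}}\,\|B\|\,\sqrt{t} < 2\|B\|\sqrt{t},
\]
which is the claimed bound. The main conceptual obstacle is recognizing that the split $N - M = (A+B)^\top B + B^\top A$ (rather than, say, the symmetric form $A^\top B + B^\top A + B^\top B$ treated as a single block) is what enables the $\sqrt{t}$ rate: each off-diagonal term has exactly one ``large'' factor that must be absorbed into an adjacent exponential, and the spectral smoothing lemma is precisely the tool that converts that absorption into a $1/\sqrt{s}$ integrable singularity instead of a constant. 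Everything else is routine.
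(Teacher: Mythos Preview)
Your proof is correct and lands on the same constant $2\sqrt{2/e}$ as the paper, but it takes a genuinely different route. The paper interpolates in a parameter $\zeta\in[0,1]$, setting $Z(\zeta)=-(A+\zeta B)^\top(A+\zeta B)t$ and bounding $\sup_\zeta\|\frac{d}{d\zeta}e^{Z(\zeta)}\|$ via the integral representation of the derivative of the matrix exponential; the working decomposition there is $\frac{d}{d\zeta}Z(\zeta)=-t\bigl((A+\zeta B)^\top B + B^\top(A+\zeta B)\bigr)$, with each factor $(A+\zeta B)$ absorbed into the adjacent $e^{(1-\tau)Z(\zeta)}$ or $e^{\tau Z(\zeta)}$ using the same spectral smoothing estimate you invoke. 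You instead apply Duhamel's formula directly in the time variable $s$ and use the asymmetric split $N-M=(A+B)^\top B + B^\top A$, which matches each ``large'' factor to the exponential it already sits beside. Your argument is the more elementary of the two---one integral instead of an interpolation followed by an integral---and your asymmetric decomposition is slightly cleaner than the paper's symmetric-then-regroup step. The paper's $\zeta$-interpolation, on the other hand, is deliberately chosen as a warm-up for the full proof of Theorem~\ref{thm:operator-difference}, where the kernels are time-varying and a product-integral analogue of the same interpolation is what carries the argument; your Duhamel-in-$s$ approach does not extend as directly to that setting.
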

\begin{proof}[Proof of Lemma~\ref{lem:intuition}]
Define $Z(\zeta) = -(A + \zeta B)^{\top} (A + \zeta B) t$. 
By the fundamental theorem of calculus
\begin{align*}
\|e^{-(A+B)^{\top} (A+B)t} - e^{-A^{\top} A t}\| = \|e^{Z(1)} - e^{Z(0)}\| &= \|\int_{0}^1 \frac{d}{d\zeta}e^{Z(\zeta)}d\zeta\| \leq \sup_{\zeta \in [0,1]} \|\frac{d}{d\zeta} e^{Z(\zeta)}\|.
\end{align*}
Using the integral representation of the exponential map (see, e.g., Theorem 1.5.3 of \citep{dollard_friedman_1984}),
\begin{align*}
\|\frac{d e^{Z(\zeta)}}{d\zeta}\| &= \| \int_{0}^1 e^{(1-\tau)Z(\zeta)} (\frac{d}{d\zeta} Z(\zeta)) e^{\tau Z(\zeta)} d\tau \| \\
&= \|t \int_{0}^1 e^{(1-\tau) Z(\zeta)} (A^{\top} B + B^{\top} A + 2 \zeta B^{\top} B) e^{\tau Z(\zeta)} d\tau\| \\
&\leq \underbrace{\|t \int_{0}^1 e^{(1-\tau) Z(\zeta)} (A+\zeta B)^{\top} B e^{\tau Z(\zeta)} d\tau\|}_{\mbox{(Term 1)}} + \underbrace{\|t \int_{0}^1 e^{(1-\tau) Z(\zeta)} B^{\top} (A+\zeta B) e^{\tau Z(\zeta)} d\tau\|}_{\mbox{(Term 2)}}\,. \\
\end{align*}

By symmetry under transposing and reversing time, $\mbox{(Term 1)}  = \mbox{(Term 2)}$, so it suffices to bound the first term. Since $\|e^{\tau Z(\zeta)} \| \leq 1$,
\begin{align*}\mbox{(Term 1)} &\leq t \int_{0}^1 \|e^{(1-\tau) Z(\zeta)} (A + \zeta B)^{\top}\| \|B\| \|e^{\tau Z(\zeta)}\| d\tau \\
&\leq t \|B\| \int_{0}^1 \|e^{(1-\tau) Z(\zeta)} (A+\zeta B)^{\top}\| d\tau \\
&= t  \|B\|\int_{0}^1 \sqrt{\|e^{(1-\tau) Z(\zeta)} (A + \zeta B)^{\top} (A + \zeta B) e^{(1-\tau) Z(\zeta)}\|} d\tau \\
&= \sqrt{t} \|B\|\int_{0}^1 \sqrt{\|e^{(1-\tau) Z(\zeta)} Z(\zeta) e^{(1-\tau) Z(\zeta)}\|} d\tau \\
&\leq \sqrt{t} \|B\|\int_{0}^1 \sup_{\lambda \geq 0} \sqrt{\lambda e^{-2(1-\tau) \lambda}} d\tau\, \\
&= \sqrt{t} \|B\| \int_{0}^1 \sqrt{1/(2e(1-\tau))} d\tau \\
&= \sqrt{2t/e} \|B\|\,.
\end{align*}
where in the third-to-last line we use the Courant-Fischer-Weyl theorem and the fact that $Z(\zeta)$ is negative semidefinite. Combining these bounds $
\|e^{-(A+B)^{\top} (A+B) t} - e^{-A^{\top} A t}\| \leq 2\sqrt{2t/e} \|B\| \leq 2 \|B\| \sqrt{t}$.
\end{proof}

 Finally, let us combine Lemma~\ref{lem:intuition} with the weight-change bound in Proposition~\ref{prop:sqrtTweightchange} to prove Theorem~\ref{thm:simplified}. Notice that the weight-change bound in Proposition~\ref{prop:sqrtTweightchange} implies \begin{align*}
\|B\| \leq \Lip(Dh) \|\bw(T) - \bw_0\|  \leq \Lip(Dh) \sqrt{T R_0} / \alpha\,.
\end{align*}
So Lemma~\ref{lem:intuition} implies
\begin{align*}
\|r'(T) - \bar{r}(T)\| \leq 2\Lip(Dh) T \sqrt{R_0} \alpha^{-1} \|r(0)\| = 2\kappa\|r(0)\|.
\end{align*}
Combining this with $\|r'(T) - \bar{r}(T)\| \leq \|r'(T)\| + \|\bar{r}(T)\| \leq 2\|r(0)\| = 2\sqrt{2R_0}$ implies
\eqref{eq:intuition-final-bound}. Thus, we have shown Theorem~\ref{thm:simplified}, which is the result of Theorem~\ref{thm:ntk-train-error} if we replace $r$ by $r'$. The actual proof of the theorem handles the time-varying kernel $K_t$, and is in Appendix~\ref{app:main-proof}.

\subsection{Proof ideas for Theorem~\ref{thm:tight-converse}}
The converse in Theorem~\ref{thm:tight-converse} is achieved in the simple case where $h(w) = aw + \frac{1}{2}bw^2$ for $a = \frac{1}{\sqrt{T}}$ and $b = \Lip(Dh)$, and $w_0 = 0$ and $R(y) = \frac{1}{2}(y - \sqrt{2R_0})^2$, as we show in Appendix~\ref{app:tight-converse} by direct calculation.

\section{Discussion}

A limitation of our result is that it applies only to the gradient flow, which corresponds to SGD with infinitesimally small step size. However, larger step sizes are beneficial for generalization in practice (see, e.g., \cite{li2021validity,andriushchenko2022sgd}), so it would be interesting to understand the validity of the NTK approximation in that setting. Another limitation is that our result applies only to the square loss, and not to other popular losses such as the cross-entropy loss. Indeed, the known bounds in the setting of general losses require either a ``well-conditioning'' assumption, or taking $\alpha$ exponential in the training time $T$ \citep{chizat2019lazy}. Can one prove bounds of analogous to Theorem~\ref{thm:ntk-train-error} for more general losses, with $\alpha$ depending polynomially on $T$, and without conditioning assumptions?

A natural question raised by our bounds in Theorems~\ref{thm:ntk-train-error} and \ref{thm:tight-converse} is: how do the dynamics behave just outside the regime where the NTK approximation is valid? For models $h$ where $\Lip(h)$ and $\Lip(Dh)$ are bounded by a constant, can we understand the dynamics in the regime where $T \geq C \alpha$ for some large constant $C$ and $\alpha \gg C$, at the edge of the lazy training regime?

\section*{Acknowledgements}
We thank Emmanuel Abbe, Samy Bengio, and Joshua Susskind for stimulating and helpful discussions. EB also thanks
Apple for the company's generous support through the AI/ML fellowship.

\bibliography{references}
\bibliographystyle{tmlr}

\appendix 

\section{Proof of Theorem~\ref{thm:ntk-train-error}}\label{app:main-proof}

\subsection{Notations}
We let
$R_0 := R(\alpha h(\bw_0))$ denote the loss at initialization. We define the residuals $r(t),\bar{r}(t) \in \cF$ under training the original model and the linearized model as
\begin{align*}
r(t) = y^* - \alpha h(\bw(t)), \mbox{ and } \bar{r}(t) = y^* - \alpha \bar{h}(\bar\bw(t)).
\end{align*}

Since the evolution of $\bw$ and $\bar\bw$ is given by the gradient flow in \eqref{eq:grad-flow} and \eqref{eq:grad-flow-bar}, the residuals evolve as follows. We write $Dh^{\top} = \left(\frac{dh}{d\bw}\right)^{\top}$ to denote the adjoint of $Dh = \frac{dh}{d\bw}$.
\begin{align*}
\frac{dr}{dt} = \frac{dr}{d\bw}\frac{d\bw}{dt} = -\alpha Dh(\bw)(-\nabla F_{\alpha}(\bw)) = \alpha D(\bw) (\frac{1}{\alpha} Dh(\bw)^{\top} \nabla R(\alpha h(\bw))) = -Dh(\bw)Dh(\bw)^{\top} r\,,
\end{align*}
since $\nabla R(\alpha h(\bw)) = -(y^* - \alpha h(\bw)) = -r$. An analogous result can be derived for the residual $\bar{r}$ under the linearized dynamics:
\begin{align*}
\frac{d\bar r}{dt} = -D\bar h(\barbw)D\bar h(\barbw)^{\top} \bar r.
\end{align*}

For any time $t \geq 0$, define the kernel $K_t : \cF \to \cF$ as $$K_t := Dh(\bw(t))Dh(\bw(t))^{\top}\,.$$ Since $K_0 = Dh(\bw(0))Dh(\bw(0))^{\top} = D\bar h(\bar\bw(t)) D\bar h(\bar\bw(t))^{\top}$, we can write the dynamics in compact form:
\begin{align}\label{eq:residual-dynamics}
\frac{dr}{dt} = -K_t r \quad \mbox{ and }\quad \frac{d\bar r}{dt} = -K_0 \bar{r}\,.
\end{align}

\subsection{Proof overview}

Our proof of Theorem~\ref{thm:ntk-train-error} is outlined in the flowchart in Figure~\ref{fig:proof-flowchart}. First, we use Lemma~\ref{lem:sqrtTweightchange-restated} to argue that the change in the weights is bounded above during training. This implies several basic facts about the continuity and boundedness of the kernel $K_t$ over time, which are written as Lemmas~\ref{lem:weights-ball-r},~\ref{lem:Dhclose},~\ref{lem:Ktbounded} \ref{lem:Ktcontinuous}. These lemmas allow us to write the dynamics of $r$ and $\bar{r}$ using product integrals \cite{dollard_friedman_1984} in Lemma~\ref{lem:product-integral-formulation-of-dynamics}. Product integrals are a standard tool in differential equations and quantum mechanics, but known results on product integrals do not suffice to prove our theorem. Thus, in Theorem~\ref{thm:operator-difference}, we prove a new operator-norm bound between differences of product integrals. Theorem~\ref{thm:operator-difference} implies our main Theorem~\ref{thm:ntk-train-error} when we specialize to the case of gradient-flow training of a model, and combine it with the weight-change bound from Lemma~\ref{lem:sqrtTweightchange-restated}. The bulk of the proof is dedicated to showing Theorem~\ref{thm:operator-difference}. The proof is an interpolation argument, where we interpolate between the two product integrals being compared, and show that the operator norm of the derivative of the interpolation path is bounded. In order to show this, we require several new technical bounds -- most notably Claim~\ref{claim:tech-telescope-asymm}, which shows that for any matrices $X,Y \in \R^{n \times d}$ and any $t \geq 0$ we have
\begin{align*}
\|e^{-(X+Y)(X+Y)^{\top} t} (X+Y) - e^{-XX^{\top}t}X\| \leq 3\|Y\|\,.
\end{align*}

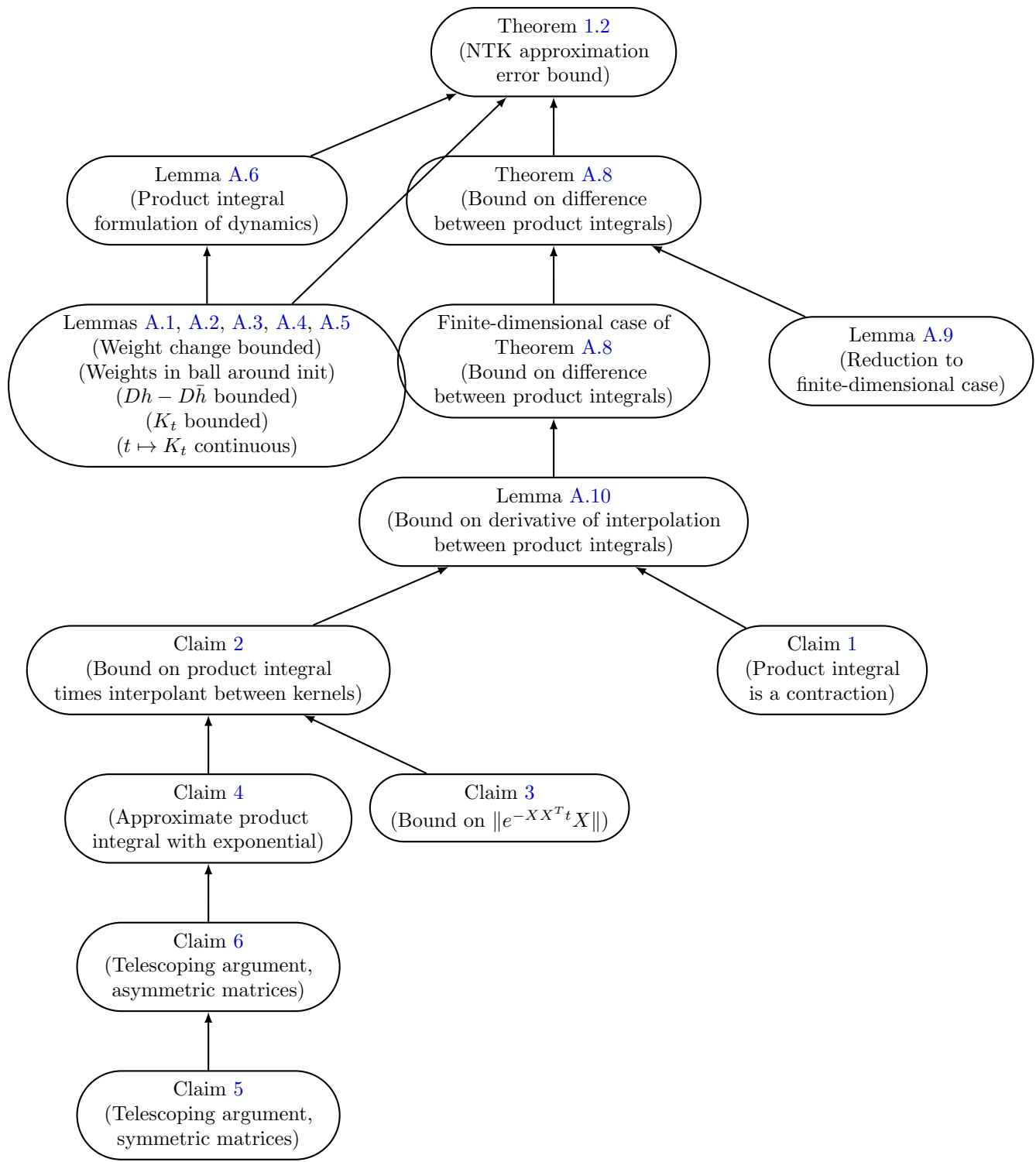
\begin{figure}
 
\begin{tikzpicture}[font=\small,thick]

\node[draw, rounded rectangle,
minimum width=2.5cm,
minimum height=1cm] (mainthm) {\begin{tabular}{@{}c@{}}Theorem~\ref{thm:ntk-train-error} \\ (NTK approximation \\ error bound)\end{tabular}};

\node[draw, rounded rectangle,
minimum width=2.5cm,
minimum height=1cm,
below=of mainthm] (thmA8) {\begin{tabular}{@{}c@{}}Theorem~\ref{thm:operator-difference} \\ (Bound on difference \\ between product integrals)\end{tabular}};

\node[draw, rounded rectangle,
minimum width=2.5cm,
minimum height=1cm,
left=of thmA8] (lemmaA6) {\begin{tabular}{@{}c@{}}Lemma~\ref{lem:product-integral-formulation-of-dynamics} \\ (Product integral \\ formulation of dynamics)\end{tabular}};

\node[draw, rounded rectangle,
minimum width=2.5cm,
minimum height=1cm,
below=of thmA8] (finitethmA8) {\begin{tabular}{@{}c@{}}Finite-dimensional case of \\ Theorem~\ref{thm:operator-difference} \\ (Bound on difference \\ between product integrals)\end{tabular}};

\node[draw, rounded rectangle,
minimum width=2.5cm,
minimum height=1cm,
below=of lemmaA6] (basiclemmas) {\begin{tabular}{@{}c@{}}Lemmas~\ref{lem:sqrtTweightchange-restated},~\ref{lem:weights-ball-r},~\ref{lem:Dhclose},~\ref{lem:Ktbounded},~\ref{lem:Ktcontinuous} \\ (Weight change bounded) \\ (Weights in ball around init) \\ ($Dh-D\bar{h}$ bounded) \\ ($K_t$ bounded) \\ ($t \mapsto K_t$ continuous)\end{tabular}};

\node[draw, rounded rectangle,
minimum width=2.5cm,
minimum height=1cm,
right=of finitethmA8] (lemmaA9) {\begin{tabular}{@{}c@{}}Lemma~\ref{lem:reduce-to-finite-dim} \\ (Reduction to \\ finite-dimensional case)\end{tabular}};

\node[draw, rounded rectangle,
minimum width=2.5cm,
minimum height=1cm,
below=of finitethmA8] (lemmaA10) {\begin{tabular}{@{}c@{}}Lemma~\ref{lem:derivative-bound} \\ (Bound on derivative of interpolation\\ between  product integrals)\end{tabular}};

\node[draw, rounded rectangle,
minimum width=2.5cm,
minimum height=1cm,
below right=of lemmaA10] (claim1) {\begin{tabular}{@{}c@{}}Claim~\ref{claim:P-contractive} \\ (Product integral \\ is a contraction)\end{tabular}};

\node[draw, rounded rectangle,
minimum width=2.5cm,
minimum height=1cm,
below left=of lemmaA10] (claim2) {\begin{tabular}{@{}c@{}}Claim~\ref{claim:time-dependent-expminusasquaredtimesa} \\ (Bound on product integral \\ times interpolant between kernels)\end{tabular}};

\node[draw, rounded rectangle,
minimum width=2.5cm,
minimum height=1cm,
below right =of claim2] (claim3) {\begin{tabular}{@{}c@{}}Claim~\ref{claim:variational-eigenvalue} \\ (Bound on $\|e^{-XX^T t}X\|$)\end{tabular}};

\node[draw, rounded rectangle,
minimum width=2.5cm,
minimum height=1cm,
below=of claim2] (claim4) {\begin{tabular}{@{}c@{}}Claim~\ref{claim:technical-center} \\ (Approximate product \\  integral with exponential)\end{tabular}};

\node[draw, rounded rectangle,
minimum width=2.5cm,
minimum height=1cm,
below=of claim4] (claim6) {\begin{tabular}{@{}c@{}}Claim~\ref{claim:tech-telescope-asymm} \\ (Telescoping argument, \\ asymmetric matrices)\end{tabular}};

\node[draw, rounded rectangle,
minimum width=2.5cm,
minimum height=1cm,
below=of claim6] (claim5) {\begin{tabular}{@{}c@{}}Claim~\ref{claim:tech-telescope-symm} \\ (Telescoping argument, \\symmetric matrices)\end{tabular}};

\draw[-latex] (claim5) edge (claim6);
\draw[-latex] (claim6) edge (claim4);
\draw[-latex] (claim4) edge (claim2);
\draw[-latex] (claim3) edge (claim2);
\draw[-latex] (claim1) edge (lemmaA10);
\draw[-latex] (claim2) edge (lemmaA10);
\draw[-latex] (lemmaA9) edge (thmA8);
\draw[-latex] (finitethmA8) edge (thmA8);
\draw[-latex] (lemmaA10) edge (finitethmA8);
\draw[-latex] (basiclemmas) edge (lemmaA6);
\draw[-latex] (basiclemmas) edge (mainthm);
\draw[-latex] (thmA8) edge (mainthm);
\draw[-latex] (lemmaA6) edge (mainthm);
\end{tikzpicture}

\caption{Proof structure.}\label{fig:proof-flowchart}
\end{figure}

\subsection{Basic facts about boundedness and continuity of the kernel}
We begin with a series of simple propositions. Recall Proposition~\ref{prop:sqrtTweightchange}, which we restate below with a slight strengthening in \eqref{eq:weight-bound-movement-cauchy-schwarz-strengthened} which will be needed later on.
\begin{lemma}[Restatement of Proposition~\ref{prop:sqrtTweightchange}]\label{lem:sqrtTweightchange-restated} For any time $t$,
\begin{align*}
\|\bw(t) - \bw_0\| \leq \sqrt{tR_0} / \alpha \quad \mbox{ and } \quad \|\bar\bw(t) - \bw_0\| &\leq \sqrt{tR_0} /\alpha\,.
\end{align*}
and furthermore
\begin{align}\label{eq:weight-bound-movement-cauchy-schwarz-strengthened}
\int_{0}^t \|\frac{d\bw}{d\tau}\| d\tau \leq \sqrt{t R_0} / \alpha \quad \mbox{ and } \quad \int_{0}^t \|\frac{d\bar\bw}{d\tau}\| d\tau \leq \sqrt{t R_0} / \alpha\,.\end{align}
\end{lemma}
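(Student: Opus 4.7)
The plan is to follow the same energy-dissipation argument already sketched in the main text for Proposition~\ref{prop:sqrtTweightchange}, but now keeping the intermediate integral bound on $\int_0^t \|d\bw/d\tau\|\, d\tau$ rather than only its consequence after the triangle inequality. The key observation is that the gradient flow $\frac{d\bw}{dt} = -\frac{1}{\alpha^2}\nabla_{\bw} R(\alpha h(\bw))$ has the standard property that the squared velocity equals (up to a factor $1/\alpha^2$) the instantaneous rate of decrease of the loss:
\[
\bigl\|\tfrac{d\bw}{dt}\bigr\|^{2} \;=\; \bigl\langle \tfrac{d\bw}{dt},\,-\tfrac{1}{\alpha^{2}}\nabla_{\bw} R(\alpha h(\bw))\bigr\rangle \;=\; -\tfrac{1}{\alpha^{2}}\tfrac{d}{dt} R(\alpha h(\bw(t)))\, ,
\]
where the second equality is the chain rule applied to $t \mapsto R(\alpha h(\bw(t)))$.

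From here the argument is two lines. First, by Cauchy--Schwarz applied to the functions $1$ and $\|d\bw/d\tau\|$ on $[0,t]$,
\[
\int_{0}^{t}\bigl\|\tfrac{d\bw}{d\tau}\bigr\|\, d\tau \;\leq\; \sqrt{t \int_{0}^{t}\bigl\|\tfrac{d\bw}{d\tau}\bigr\|^{2}\, d\tau} \;=\; \sqrt{\tfrac{t}{\alpha^{2}}\bigl(R_{0} - R(\alpha h(\bw(t)))\bigr)} \;\leq\; \sqrt{tR_{0}}/\alpha ,
\]
where the last inequality uses that $R \geq 0$. This is exactly the strengthened bound \eqref{eq:weight-bound-movement-cauchy-schwarz-strengthened}. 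Second, the bound on $\|\bw(t) - \bw_0\|$ is a free consequence: $\|\bw(t) - \bw_0\| = \bigl\|\int_0^t \tfrac{d\bw}{d\tau}\, d\tau\bigr\| \leq \int_0^t \|\tfrac{d\bw}{d\tau}\|\, d\tau \leq \sqrt{tR_0}/\alpha$.

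The argument for $\bar\bw(t)$ is literally identical: replacing $h$ by its linearization $\bar h$ does not change the energy identity, since it only relies on the fact that we are running gradient flow of the rescaled loss $R(\alpha\,\cdot\,)$ applied to a smooth parametrization. In particular the same chain-rule computation gives $\|d\bar\bw/d\tau\|^2 = -\frac{1}{\alpha^2}\frac{d}{d\tau} R(\alpha \bar h(\bar\bw(\tau)))$, and since $R(\alpha \bar h(\bar\bw(0))) = R(\alpha h(\bw_0)) = R_0$ (because $\bar h(\bw_0) = h(\bw_0)$ and $\bar\bw(0) = \bw_0$) and $R \geq 0$, the same Cauchy--Schwarz step yields the desired bound.

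There is essentially no obstacle here — the proof is a one-paragraph energy identity plus Cauchy--Schwarz, and the only subtlety is remembering to record the integral form \eqref{eq:weight-bound-movement-cauchy-schwarz-strengthened} before applying the triangle inequality, since that stronger form is what will be needed later in the product-integral arguments of Appendix~\ref{app:main-proof}.
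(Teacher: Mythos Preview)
Your proposal is correct and follows exactly the paper's own argument: the energy identity $\|d\bw/dt\|^2 = -\alpha^{-2}\tfrac{d}{dt}R(\alpha h(\bw(t)))$ combined with Cauchy--Schwarz on $[0,t]$ gives the integral bound \eqref{eq:weight-bound-movement-cauchy-schwarz-strengthened}, and the displacement bound follows by the triangle inequality. This is precisely what the paper does in the proof of Proposition~\ref{prop:sqrtTweightchange} (and then simply points back to in the appendix), so there is nothing to add.
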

\begin{proof}
The statement \eqref{eq:weight-bound-movement-cauchy-schwarz-strengthened} is also implied by the proof of Proposition~\ref{prop:sqrtTweightchange} in the main text.
\end{proof}
 The first implication is that the weights $\bw(t)$ stay within the ball of radius $\rho$ around $\bw_0$, during the time-span that we consider.
\begin{lemma}\label{lem:weights-ball-r}
For any time $0 \leq t \leq T$, we have $\|\bw(t) - \bw_0\| \leq \rho$.
\end{lemma}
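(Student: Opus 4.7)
The plan is to chain together Lemma~\ref{lem:sqrtTweightchange-restated} with the time-horizon hypothesis $T \leq \alpha^2 \rho^2 / R_0$ from Theorem~\ref{thm:ntk-train-error}. Lemma~\ref{lem:sqrtTweightchange-restated} gives the square-root-in-time weight displacement bound $\|\bw(t) - \bw_0\| \leq \sqrt{tR_0}/\alpha$, which is monotone in $t$, so for any $0 \leq t \leq T$ I would write
\begin{align*}
\|\bw(t) - \bw_0\| \;\leq\; \sqrt{tR_0}/\alpha \;\leq\; \sqrt{TR_0}/\alpha \;\leq\; \sqrt{(\alpha^2\rho^2/R_0)\cdot R_0}/\alpha \;=\; \rho\,.
\end{align*}
That is essentially the entire argument; the lemma is really just a repackaging of Lemma~\ref{lem:sqrtTweightchange-restated} under the time horizon allowed by Theorem~\ref{thm:ntk-train-error}.

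The only subtlety, and the step I would think most carefully about, is the well-posedness of the gradient flow \eqref{eq:grad-flow}: since $Dh$ is assumed Lipschitz only inside the ball of radius $\rho$ around $\bw_0$, the vector field defining the flow is a priori only locally Lipschitz there, so it is not immediate that $\bw(t)$ is even defined on all of $[0,T]$. The standard fix is a short bootstrap/continuity argument. Let $t^\star = \sup\{t \geq 0 : \bw(\cdot) \text{ exists on } [0,t] \text{ with } \|\bw(s)-\bw_0\| \leq \rho \text{ for all } s \leq t\}$. By local existence $t^\star > 0$, and on $[0, t^\star)$ the energy identity underlying Lemma~\ref{lem:sqrtTweightchange-restated} is valid, so the display above gives $\|\bw(t) - \bw_0\| \leq \sqrt{tR_0}/\alpha < \rho$ whenever $t < T$. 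If $t^\star < T$ then continuity of $\bw$ up to $t^\star$ together with this strict inequality would contradict maximality, hence $t^\star \geq T$, and the bound of the lemma holds on $[0,T]$. The main obstacle is really only to phrase this bootstrap cleanly; no new estimates are needed beyond Lemma~\ref{lem:sqrtTweightchange-restated}.
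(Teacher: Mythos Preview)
Your proposal is correct and follows essentially the same approach as the paper, which simply says the lemma is immediate from Lemma~\ref{lem:sqrtTweightchange-restated} and the hypothesis $T \leq \rho^2 \alpha^2 / R_0$. Your additional bootstrap argument addressing well-posedness of the flow (given that $Dh$ is only assumed Lipschitz inside the ball) is a point the paper glosses over, but it does not change the substance of the proof.
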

\begin{proof}
Immediate from Lemma~\ref{lem:sqrtTweightchange-restated} and the fact that $T \leq \rho^2 \alpha^2 / R_0$.
\end{proof}
This allows to use the bounds $\mathrm{Lip}(h)$ and $\mathrm{Lip}(Dh)$ on the Lipschitz constants of $h$ and $Dh$. Specifically, the kernels $K_t$ and $K_0$ stay close during training, in the sense that the difference between $Dh$ and $D\bar{h}$ during training is bounded.
\begin{lemma}\label{lem:Dhclose}
For any time $0 \leq t \leq T$, we have
\begin{align*}
\|Dh(\bw(t)) - D\bar{h}(\barbw(t))\| \leq \Lip(Dh) \sqrt{t R_0} / \alpha
\end{align*}
\end{lemma}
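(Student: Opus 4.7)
The plan is to observe that the linearized model $\bar h(\bw) = h(\bw_0) + Dh(\bw_0)(\bw - \bw_0)$ is affine in its argument, so its derivative is constant: $D\bar h(\bv) = Dh(\bw_0)$ for every $\bv$. In particular, $D\bar h(\barbw(t)) = Dh(\bw_0)$, regardless of the trajectory $\barbw(\cdot)$. Hence the quantity to bound reduces to
\[
\|Dh(\bw(t)) - D\bar h(\barbw(t))\| = \|Dh(\bw(t)) - Dh(\bw_0)\|.
\]

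Next, I would invoke the Lipschitzness of $Dh$. This is legitimate because Lemma~\ref{lem:weights-ball-r} guarantees $\|\bw(t) - \bw_0\| \leq \rho$ for all $0 \leq t \leq T$, so the entire trajectory lies in the ball of radius $\rho$ around $\bw_0$ on which $Dh$ is assumed to be $\Lip(Dh)$-Lipschitz. Therefore
\[
\|Dh(\bw(t)) - Dh(\bw_0)\| \leq \Lip(Dh)\,\|\bw(t) - \bw_0\|.
\]

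Finally, I would plug in the weight-movement bound from Lemma~\ref{lem:sqrtTweightchange-restated}, namely $\|\bw(t) - \bw_0\| \leq \sqrt{t R_0}/\alpha$, to conclude
\[
\|Dh(\bw(t)) - D\bar h(\barbw(t))\| \leq \Lip(Dh)\,\sqrt{t R_0}/\alpha,
\]
which is exactly the claimed bound. There is no real obstacle here; the only point worth flagging is the (trivial but essential) observation that the trajectory $\barbw(t)$ of the linearized dynamics drops out entirely, since $D\bar h$ is a constant operator equal to $Dh(\bw_0)$. The lemma is really a direct combination of the Lipschitz hypothesis on $Dh$ with the $\sqrt{t R_0}/\alpha$ weight-change estimate.
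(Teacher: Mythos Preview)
Your proposal is correct and follows essentially the same argument as the paper: the paper also uses that $D\bar h(\barbw(t)) = Dh(\bw_0)$ because $\bar h$ is the linearization at $\bw_0$, then applies the Lipschitz bound on $Dh$ (valid since Lemma~\ref{lem:weights-ball-r} keeps $\bw(t)$ in the ball of radius $\rho$) together with the weight-movement estimate $\|\bw(t)-\bw_0\|\leq \sqrt{tR_0}/\alpha$ from Lemma~\ref{lem:sqrtTweightchange-restated}.
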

\begin{proof}
Since (a) $\bar{h}$ is the linearization of $h$ at $\bw_0$, and (b) $\|\bw(t) - \bw_0\| \leq \min(\rho,\sqrt{t R_0} / \alpha)$ by  Lemma~\ref{lem:weights-ball-r} and Lemma~\ref{lem:sqrtTweightchange-restated},
\begin{align*}
\|Dh(\bw(t)) - D\bar{h}(\barbw(t))\| \stackrel{(a)}{=} \|Dh(\bw(t)) - Dh(\bw_0)\| \stackrel{(b)}{\leq} \Lip(Dh) \sqrt{t R_0} / \alpha\,.
\end{align*}
\end{proof}
And therefore the kernel $K_t$ is bounded at all times $0 \leq t \leq T$ in operator norm.
\begin{lemma}\label{lem:Ktbounded}
For any time $0 \leq t \leq T$, we have $\|K_t\| \leq 3\|Dh(\bw(0))\|^2 + 2\Lip(Dh)tR_0 / \alpha^2$.
\end{lemma}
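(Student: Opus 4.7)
The plan is to observe that $K_t = Dh(\bw(t)) Dh(\bw(t))^\top$ is a Gram operator, so $\|K_t\|_{\op} = \|Dh(\bw(t))\|_{\op}^2$, and then bound $\|Dh(\bw(t))\|$ by comparison to its value at initialization. Everything else is a triangle inequality followed by a square expansion.

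First, I would invoke Lemma~\ref{lem:weights-ball-r} to conclude that $\bw(t)$ stays inside the radius-$\rho$ ball around $\bw_0$ for all $t \in [0,T]$, which is precisely the region in which the $\Lip(Dh)$-Lipschitz hypothesis on $Dh$ is assumed to hold. Thus the triangle inequality combined with the Lipschitz bound gives
$$\|Dh(\bw(t))\| \leq \|Dh(\bw_0)\| + \|Dh(\bw(t)) - Dh(\bw_0)\| \leq \|Dh(\bw_0)\| + \Lip(Dh)\,\|\bw(t) - \bw_0\|.$$
Inserting the weight-change bound $\|\bw(t) - \bw_0\| \leq \sqrt{tR_0}/\alpha$ from Lemma~\ref{lem:sqrtTweightchange-restated}, I obtain
$$\|Dh(\bw(t))\| \leq \|Dh(\bw_0)\| + \Lip(Dh)\sqrt{tR_0}/\alpha.$$

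Squaring both sides and applying the elementary inequality $(a+b)^2 \leq 3a^2 + \tfrac{3}{2}b^2$ (a mild form of Young's inequality) to split the cross term $2ab$ against the two squares produces
$$\|K_t\| = \|Dh(\bw(t))\|^2 \leq 3\|Dh(\bw_0)\|^2 + \tfrac{3}{2}\Lip(Dh)^2 t R_0/\alpha^2,$$
which matches the claimed bound up to the precise constant on the second term. This is a routine calculation: there is no essential obstacle, and the two prerequisite lemmas (Lemma~\ref{lem:weights-ball-r} for the applicability of the Lipschitz hypothesis, and Lemma~\ref{lem:sqrtTweightchange-restated} for the quantitative weight-change bound) do all the real work. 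The only care needed is in how the cross term is absorbed, since distributing it more generously into the first term gives the factor $3$ at the cost of a slightly larger constant on the second term.
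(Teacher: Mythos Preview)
Your approach is essentially identical to the paper's: both bound $\|Dh(\bw(t))\|$ via the triangle inequality plus the Lipschitz estimate and weight-change bound, then square and absorb the cross term (the paper tacks on an extra, harmless $+\|Dh(\bw(0))\|^2$ before expanding, which is why it lands on the constant $2$ rather than your $3/2$). Note that both derivations actually produce $\Lip(Dh)^2$ in the second term, so the $\Lip(Dh)$ in the stated bound appears to be a typo rather than something you failed to match.
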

\begin{proof}
By triangle inequality and Lemma~\ref{lem:Dhclose},
\begin{align*}
\|K_t - K_0\| &= \|Dh(\bw(t))Dh(\bw(t))^{\top} - Dh(\bw(0))Dh(\bw(0))^{\top}\| \\
&\leq \|Dh(\bw(t))\|^2 +\|Dh(\bw(0))\|^2 \\
&\leq (\|Dh(\bw(t)) - Dh(\bw(0))\| + \|Dh(\bw(0))\|)^2 + \|Dh(\bw(0))\|^2 \\
&\leq 3\|Dh(\bw(0))\|^2 + 2\Lip(Dh)tR_0 / \alpha^2\,.
\end{align*}
\end{proof}

And finally we note that the kernel evolves continously in time.
\begin{lemma}\label{lem:Ktcontinuous}
The map $t \mapsto K_t$ is continuous (in the operator norm topology) in the interval $[0,T]$.
\end{lemma}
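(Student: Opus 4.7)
The plan is to reduce continuity of $t \mapsto K_t$ to three ingredients already in hand: (i) $t \mapsto \bw(t)$ is continuous, (ii) $\bw \mapsto Dh(\bw)$ is Lipschitz on the ball of radius $\rho$ around $\bw_0$, and (iii) the map $A \mapsto AA^{\top}$ is continuous on bounded subsets of operators. Composing these three facts will give continuity of $K_t$, and in fact will yield a quantitative H\"older-$1/2$ modulus.

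First I would prove that $\bw(t)$ is H\"older-$1/2$ continuous in $t$ by adapting the Cauchy--Schwarz step from the proof of Lemma~\ref{lem:sqrtTweightchange-restated} to an arbitrary interval $[s,t] \subseteq [0,T]$. Using that the loss is nonincreasing along the flow, so $R(\alpha h(\bw(s))) \leq R_0$, the same argument yields
\[\|\bw(t) - \bw(s)\| \leq \int_s^t \|d\bw/d\tau\| d\tau \leq \sqrt{(t-s) R_0}/\alpha.\]
Next, since Lemma~\ref{lem:weights-ball-r} places both $\bw(t)$ and $\bw(s)$ inside the ball of radius $\rho$ around $\bw_0$, the Lipschitz hypothesis on $Dh$ gives
\[\|Dh(\bw(t)) - Dh(\bw(s))\| \leq \Lip(Dh) \sqrt{(t-s) R_0}/\alpha.\]

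Finally I would control $\|K_t - K_s\|$ by the telescoping identity
\[K_t - K_s = Dh(\bw(t))\bigl(Dh(\bw(t)) - Dh(\bw(s))\bigr)^{\top} + \bigl(Dh(\bw(t)) - Dh(\bw(s))\bigr)Dh(\bw(s))^{\top},\]
followed by the triangle inequality and submultiplicativity of the operator norm, which yields
\[\|K_t - K_s\| \leq \bigl(\|Dh(\bw(t))\| + \|Dh(\bw(s))\|\bigr)\|Dh(\bw(t)) - Dh(\bw(s))\|.\]
The factors $\|Dh(\bw(\tau))\| = \sqrt{\|K_\tau\|}$ are uniformly bounded on $[0,T]$ by Lemma~\ref{lem:Ktbounded}, so the right-hand side tends to $0$ as $t \to s$, proving continuity.

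I do not expect a real obstacle: the lemma is essentially a bookkeeping consequence of three facts already established in this subsection. The only point requiring any care is noting that the Cauchy--Schwarz estimate from Lemma~\ref{lem:sqrtTweightchange-restated} localizes to subintervals of $[0,T]$ because the loss along the flow is monotone. As a byproduct the argument gives an explicit H\"older-$1/2$ modulus for $t \mapsto K_t$, which may be convenient when invoking continuity assumptions in the product-integral formulation of Lemma~\ref{lem:product-integral-formulation-of-dynamics}.
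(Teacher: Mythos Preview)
Your proposal is correct and follows essentially the same three-step composition as the paper's proof: continuity of $t \mapsto \bw(t)$ from the flow, Lipschitzness of $\bw \mapsto Dh(\bw)$ on the $\rho$-ball via Lemma~\ref{lem:weights-ball-r}, and continuity of $A \mapsto AA^{\top}$. The paper's version is terser (three sentences, no explicit modulus), whereas you additionally extract a H\"older-$1/2$ rate by localizing the Cauchy--Schwarz estimate of Lemma~\ref{lem:sqrtTweightchange-restated} to subintervals; this extra quantitative information is not needed downstream but is harmless.
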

\begin{proof}
First, $t \mapsto \bw(t)$ is continuous in time, since it solves the gradient flow. Second, we know that $\bw(t)$ is in the ball of radius $\rho$ around $\bw_0$ by Lemma~\ref{lem:weights-ball-r}, and in this ball the map $\bw \mapsto Dh(\bw)$ is continuous because $\Lip(Dh) < \infty$. Finally, $Dh \mapsto Dh Dh^{\top}$ is continuous.
\end{proof}

\subsection{Product integral formulation of dynamics}
Now we can present an equivalent formulation of the training dynamics \eqref{eq:residual-dynamics} in terms of product integration. For any $0 \leq x \leq y \leq T$, let $P(y,x) : \R^p \to \cF$ solve the operator integral equation
\begin{align}\label{eq:dynamics-integral-equation}
P(y,x) = I - \int_{x}^y K_t P(t,x) dt\,.
\end{align}
A solution $P(y,x)$ is guaranteed to exist and to be unique:
\begin{lemma}\label{lem:product-integral-formulation-of-dynamics}
The unique solution to the integral equation \eqref{eq:dynamics-integral-equation} is given as follows. For any $0 \leq x \leq y \leq T$ define $s_{m,j} = (y-x)(j/m) + x$ and $\delta = (y-x) / m$, and let $P(y,x)$ be the product integral 
\begin{align*}
P(y,x) := \prod_{x}^y e^{-K_s ds} := \lim_{m \to \infty} \prod_{j=1}^m e^{-\delta K_{s_j}} = \lim_{m \to \infty} e^{-\delta K_{s_m}} e^{-\delta K_{s_{m-1}}} \dots e^{-\delta K_{s_2}} e^{-\delta K_{s_1}}\,.
\end{align*}
\end{lemma}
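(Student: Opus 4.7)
The plan is to reduce the claim to the standard existence and uniqueness theory for Volterra-type operator integral equations (see Chapter~1 of \cite{dollard_friedman_1984}), whose hypotheses---continuity and uniform boundedness of $t\mapsto K_t$ on the compact interval $[0,T]$---have already been established in Lemmas~\ref{lem:Ktbounded} and~\ref{lem:Ktcontinuous}. So the proof amounts to verifying these hypotheses and quoting (or reproducing) the three standard steps: existence of the product integral, the differential/integral identity it satisfies, and uniqueness.

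First, I would show that the product integral $\prod_x^y e^{-K_s\,ds}$ exists as an operator-norm limit for every $0\leq x\leq y\leq T$. Continuity of $t \mapsto K_t$ on the compact interval $[x,y]$ implies uniform continuity and a uniform bound $\|K_t\| \leq M$. A Cauchy estimate, obtained by telescoping the difference between two partial products with meshes $\delta$ and $\delta'$ against a common refinement, yields a bound of the form
\begin{align*}
\Big\|\prod_{j=1}^m e^{-\delta K_{s_j}} - \prod_{j=1}^{m'} e^{-\delta' K_{s'_j}}\Big\| \leq C(M,T)\cdot \omega(\max(\delta,\delta')),
\end{align*}
where $\omega$ is the modulus of continuity of $t \mapsto K_t$. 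This shows the partial products form a Cauchy sequence in operator norm; I denote the limit by $P(y,x)$ and observe $P(x,x) = I$ (empty product).

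Second, I would verify that $P$ satisfies \eqref{eq:dynamics-integral-equation} by differentiating in $y$. The semigroup-like identity $P(y+\eps,x) = \big(\prod_y^{y+\eps} e^{-K_s\,ds}\big) P(y,x)$ combined with the first-order expansion $\prod_y^{y+\eps} e^{-K_s\,ds} = I - \eps K_y + o(\eps)$ (from $\|e^{-\eps K_y} - I + \eps K_y\| = O(\eps^2)$ and uniform continuity of $K_t$ near $y$) yields the operator ODE $\partial_y P(y,x) = -K_y P(y,x)$. Since $t \mapsto K_t P(t,x)$ is norm-continuous, the fundamental theorem of calculus converts this differential identity into the stated integral equation.

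Finally, uniqueness follows from Gr\"onwall's lemma: if $P_1, P_2$ both satisfy \eqref{eq:dynamics-integral-equation}, then
\begin{align*}
\|P_1(y,x) - P_2(y,x)\| \leq M \int_x^y \|P_1(t,x) - P_2(t,x)\|\,dt,
\end{align*}
which forces $P_1 = P_2$. The main technical obstacle is the Cauchy estimate in the existence step, which requires care to handle arbitrary (non-uniform) meshes in the possibly infinite-dimensional Hilbert space $\cF$; this is treated in detail in the cited product-integration reference and can be quoted from Theorem~1.3.1 of~\cite{dollard_friedman_1984}.
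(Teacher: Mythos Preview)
Your proposal is correct and takes essentially the same approach as the paper: verify that $t\mapsto K_t$ is continuous and bounded on $[0,T]$ via Lemmas~\ref{lem:Ktbounded} and~\ref{lem:Ktcontinuous}, then invoke the product-integration theory of \cite{dollard_friedman_1984}. The only difference is that the paper cites Theorems~3.4.1, 3.4.2, and~3.5.1 (the $L_s^1$ framework for separable Hilbert spaces) rather than the Chapter~1 results you cite, and does not spell out the existence/ODE/Gr\"onwall steps you sketch---but since $K_t$ is bounded and norm-continuous, either set of references suffices.
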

\begin{proof}
Existence, uniqueness, and the expression as an infinite product are guaranteed by Theorems 3.4.1, 3.4.2, and 3.5.1 of \cite{dollard_friedman_1984}, since $t \mapsto K_t$ lies in $L_s^1(0,T)$, which is the space of ``strongly integrable'' functions on $[0,T]$ defined in Definition 3.3.1 of \cite{dollard_friedman_1984}. This fact is guaranteed by the separability of $\cF$ and the continuity and boundedness of $t \mapsto K_t$ (Lemmas~\ref{lem:Ktbounded} and \ref{lem:Ktcontinuous}).
\end{proof}

The operators $P(y,x)$ are the time-evolution operators corresponding to the differential equation \eqref{eq:residual-dynamics} for the residual error $r_t$. Namely, for any time $0 \leq t \leq T$,
\begin{align*}
r_t = P(t,0) r_0\,.
\end{align*}
On the other hand, the solution to the linearized dynamics \eqref{eq:residual-dynamics} is given by
\begin{align*}
\bar{r}_t = e^{-K_0 t} r_0\,,
\end{align*}
since $e^{-K_0 t}$ is the time-evolution operator when the kernel does not evolve with time.

\subsection{Error bound between product integrals}
To prove Theorem~\ref{thm:ntk-train-error}, it suffices to bound $\|P(t,0) - e^{-K_0t}\|$, the difference of the time-evolution operators under the full dynamics versus the linearized dynamics. We will do this via a general theorem. To state it, we must define the total variation norm of a time-indexed sequence of operators:
\begin{definition}
Let $\{C_t\}_{t \in [x,y]}$ be a sequence of time-bounded operators $C_t : \R^p \to \cF$ so that $t \mapsto C_t$ is continuous in the interval $[x,y]$. Then the total variation norm of $\{C_t\}_{t \in [x,y]}$ is
\begin{align*}
\cV(\{C_t\}_{t \in [x,y]}) = \sup_{P \in \cP} \sum_{i=1}^{n_{P}-1} \|C_{t_i} - C_{t_{i-1}}\|,
\end{align*}
where the supremum is taken over partitions $\cP = \{P = \{x = t_1 \leq t_2 \leq \dots \leq t_{n_P - 1} \leq t_{n_P} = y\}\}$ of the interval $[x,y]$.
\end{definition}

We may now state the general result.

\begin{theorem}\label{thm:operator-difference}
Let $\cF$ be a separable Hilbert space, and let $\{A_t\}_{t \in [0,T]}, \{B_t\}_{t \in [0,T]}$ be time-indexed sequences of bounded operators $A_t : \R^p \to \cF$ and $B_t : \R^p \to \cF$ such that $t \mapsto A_t$ and $t \mapsto B_t$ are continuous in $[0,T]$.
Then,
\begin{align*}
\|\prod_{0}^T e^{-A_s A_s^{\top} ds} - \prod_{0}^T e^{-B_s B_s^{\top} ds}\| \leq (\sup_{t \in [0,T]} \|A_t - B_t\|) (2\sqrt{T} + 3T \cdot \cV(\{A_t - B_t\}_{t \in [0,T]}))\,.\end{align*}
\end{theorem}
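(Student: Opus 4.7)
The plan is an interpolation argument. Define $D_t = B_t - A_t$ and $C_t^\zeta = A_t + \zeta D_t$ for $\zeta \in [0,1]$, and let $P^\zeta(y,x) = \prod_x^y e^{-C_s^\zeta (C_s^\zeta)^\top ds}$. By the fundamental theorem of calculus,
\begin{align*}
\prod_0^T e^{-A_s A_s^\top ds} - \prod_0^T e^{-B_s B_s^\top ds} = -\int_0^1 \frac{d}{d\zeta} P^\zeta(T,0)\, d\zeta,
\end{align*}
so the task reduces to bounding $\|\frac{d}{d\zeta} P^\zeta(T,0)\|$ uniformly in $\zeta$. Since the bound we seek is on an operator norm and $\cF$ is separable, I would first reduce to the finite-dimensional setting (the role of Lemma~\ref{lem:reduce-to-finite-dim} in the flowchart) by restricting to finite-dimensional subspaces containing the ranges of the relevant operators and passing to a limit; this legitimizes differentiating the product integral in $\zeta$.

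In finite dimensions, discretizing the product integral and applying the product rule yields the standard formula
\begin{align*}
\frac{d}{d\zeta} P^\zeta(T,0) = -\int_0^T P^\zeta(T,s)\,\bigl( D_s (C_s^\zeta)^\top + C_s^\zeta D_s^\top \bigr)\, P^\zeta(s,0)\, ds.
\end{align*}
The two summands are related by transposition and reversing time, so it suffices to bound $\|\int_0^T P^\zeta(T,s)\, C_s^\zeta D_s^\top\, P^\zeta(s,0)\, ds\|$. The product integrals are contractions on $\cF$ because their generators are negative semidefinite (Claim~\ref{claim:P-contractive}), so I can peel $P^\zeta(s,0)$ off the right and pull $\|D_s^\top\| \leq \sup_t \|D_t\|$ out of the middle; the remaining core quantity is $\|P^\zeta(T,s) C_s^\zeta\|$, which I would control using Claim~\ref{claim:time-dependent-expminusasquaredtimesa}.

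To bound that core quantity I would proceed in two steps. First, for a constant matrix $X$, the variational Claim~\ref{claim:variational-eigenvalue} gives $\|e^{-X X^\top t} X\| \leq 1/\sqrt{2et}$ via a Courant--Fischer--Weyl argument on the spectrum of $XX^\top$, exactly as in Lemma~\ref{lem:intuition}. Second, to replace the time-ordered product $P^\zeta(T,s)$ by the constant-generator exponential $e^{-C_s^\zeta (C_s^\zeta)^\top (T-s)}$, I would telescope along a fine partition of $[s,T]$ using Claim~\ref{claim:technical-center}, whose proof in turn relies on the symmetric telescoping Claim~\ref{claim:tech-telescope-symm} together with the asymmetric Claim~\ref{claim:tech-telescope-asymm}. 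This produces a bound of the form $\|P^\zeta(T,s) C_s^\zeta\| \lesssim (T-s)^{-1/2} + \cV(\{C_t^\zeta\}_{t \in [s,T]})$, where only the $\zeta$-dependent portion of the variation, namely $\zeta\, \cV(\{D_t\})$, ultimately contributes after one combines the two symmetric terms and integrates against $d\zeta$. Integrating $(T-s)^{-1/2}$ in $s$ produces the $2\sqrt{T}$ term (with numerical constant $2\sqrt{2/e} < 2$), while integrating the total-variation piece in $s$ produces $3T\, \cV(\{D_t\})$, giving the stated bound.

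The main obstacle will be the asymmetric telescoping Claim~\ref{claim:tech-telescope-asymm}, which must establish
\begin{align*}
\|e^{-(X+Y)(X+Y)^\top t}(X+Y) - e^{-XX^\top t} X\| \leq 3\|Y\|
\end{align*}
uniformly in $X$ and $t \geq 0$. Expanding $(X+Y)(X+Y)^\top = XX^\top + XY^\top + YX^\top + YY^\top$ introduces cross terms of size $\|X\|\|Y\|$ into the exponent, and the delicate point is canceling the $\|X\|$ factor by exploiting the exponential suppression $\|e^{-XX^\top t} X\| \leq 1/\sqrt{2et}$ to absorb one copy of $X$ at a time. Obtaining a dimensionless constant (rather than one depending on $\|X\|$ or $t$) would require carefully pairing each cross term with a neighboring exponential factor and using an inner interpolation in the spirit of Lemma~\ref{lem:intuition}; I expect the bookkeeping in this step to be the hardest part of the proof.
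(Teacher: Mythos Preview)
Your outline follows the paper's proof almost step for step: interpolate via $C_t^\zeta = A_t + \zeta D_t$, reduce to finite dimensions, differentiate $P^\zeta(T,0)$ in $\zeta$ via the Duhamel-type formula, split into the two cross terms, use contractivity (Claim~\ref{claim:P-contractive}) to strip off $P^\zeta(s,0)$ and pull out $\sup_t\|D_t\|$, and control $\|P^\zeta(T,s)\,C_s^\zeta\|$ by combining the spectral bound $\|e^{-XX^\top t}X\|\lesssim t^{-1/2}$ (Claim~\ref{claim:variational-eigenvalue}) with a telescoping comparison along $[s,T]$ (Claim~\ref{claim:technical-center}, built on Claims~\ref{claim:tech-telescope-symm}--\ref{claim:tech-telescope-asymm}). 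The architecture is identical to the paper's; your anticipation that Claim~\ref{claim:tech-telescope-asymm} is the hardest step is also accurate.

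The one substantive gap is your handling of the total-variation term. You correctly observe that telescoping with $X_s=C_s^\zeta$ yields $\cV(\{C_t^\zeta\}_{t\in[s,T]})$, not $\cV(\{D_t\})$. But your proposed reduction --- that ``only the $\zeta$-dependent portion, namely $\zeta\,\cV(\{D_t\})$, ultimately contributes after one combines the two symmetric terms and integrates against $d\zeta$'' --- does not work. Total variation is a seminorm (subadditive, not linear), so $\cV(\{A_t+\zeta D_t\})$ does not split as $\cV(\{A_t\})+\zeta\,\cV(\{D_t\})$; the two cross terms contribute variations over the complementary intervals $[s,T]$ and $[0,s]$, which \emph{add} to $\cV(\{C_t^\zeta\}_{[0,T]})$ rather than cancel; and integrating in $\zeta$ does nothing to eliminate the $A_t$-variation. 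The paper instead states Claim~\ref{claim:time-dependent-expminusasquaredtimesa} directly with $\cV(\{C_s\})$ (your $D_s$). The passage from $\cV(\{A_s+\zeta C_s\})$ to $\cV(\{C_s\})$ is immediate once one of the two families is constant in time --- as in the paper's application, where $B_t\equiv Dh(\bw_0)$, so that $\cV(\{A_s+\zeta C_s\})=(1-\zeta)\,\cV(\{C_s\})\le\cV(\{C_s\})$ --- but your ``integrate in $\zeta$'' mechanism is not the right way to close this step.
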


If we can establish Theorem~\ref{thm:operator-difference}, then we may prove Theorem~\ref{thm:ntk-train-error} as follows.
\begin{proof}[Proof of Theorem~\ref{thm:ntk-train-error}]
for each $t \geq 0$ we choose the linear operators $A_t, B_t : \R^p \to \cF
$ by
$A_t = Dh(\bw(t))$ and $B_t = Dh(\bw(0))$ so that $A_t A_t^{\top} = K_t$ and $B_t B_t^{\top} = K_0$. We know that $A_t, B_t$ are bounded by Lemma~\ref{lem:Ktbounded}, and that $t \mapsto A_t$ is continuous by Lemma~\ref{lem:Ktcontinuous}. (Also $t \mapsto B_t$ is trivially continuous).
So we may apply Theorem~\ref{thm:operator-difference} to bound the difference in the residuals.

We first bound the total variation norm of $\{A_t - B_t\}_{t \in [0,T]}$, By (a) the fact from Lemma~\ref{lem:weights-ball-r} that $\bw(t)$ is in a ball of radius at most $\rho$ around $\bw_0$ where $\bw \mapsto Dh(\bw)$ is $\Lip(Dh)$-Lipschitz; (b) the fact that $t \mapsto \bw(t)$ is differentiable, since it solves a gradient flow; and (c) Lemma~\ref{lem:sqrtTweightchange-restated},
\begin{align*}
\cV(\{A_t - B_t\}_{t \in [0,T]}) &= \sup_{P \in \cP} \sum_{i=1}^{n_{P}-1} \|Dh(\bw(t_{i+1})) - Dh(\bw(0)) - Dh(\bw(t_i)) + Dh(\bw(0))\| \\
&= \sup_{P \in \cP} \sum_{i=1}^{n_{P}-1} \|Dh(\bw(t_{i+1})) - Dh(\bw(t_i))\| \\
&\stackrel{(a)}{\leq} \Lip(Dh)  \sup_{P \in \cP} \sum_{i=1}^{n_{P}-1} \|\bw(t_{i+1}) - \bw(t_i)\| \\
&\stackrel{(b)}{=} \Lip(Dh) \int_{0}^T \|\frac{d\bw}{dt}\| dt \\
&\stackrel{(c)}{\leq} \Lip(Dh) \sqrt{T R_0} / \alpha
\end{align*}
When we plug the above expression into Theorem~\ref{thm:operator-difference}, along with the bound $\|A_t - B_t\| \leq \Lip(Dh) \sqrt{t R_0} / \alpha$ from Lemma~\ref{lem:Dhclose}, we obtain:
\begin{align*}
\|r_T - \bar{r}_T\| &= \|(\prod_{s=0}^T e^{-K_s ds} - \prod_{s=0}^T e^{-K_0 ds})r_0\| \\
&\leq 
(\Lip(Dh)\sqrt{TR_0} / \alpha) (2\sqrt{T} + 3 \Lip(Dh) T^{3/2} \sqrt{R_0} / \alpha) \sqrt{2R_0}
\\
&= (2\kappa + 3\kappa^2)\sqrt{2R_0},
\end{align*}
where $\kappa = \frac{T}{\alpha}\Lip(Dh) \sqrt{R_0}$.

Also note that
\begin{align*}\|r_T - \bar{r}_T\| &\leq \|r_T\| + \|\bar{r}_T\| = \sqrt{2R(\alpha h(\bw(T)))} + \sqrt{2R(\alpha \barh(\barbw(T)))} \leq 2\sqrt{2R_0},
\end{align*}
since the gradient flow does not increase the risk.
So
\begin{align*}
\|r_T - \bar{r}_T\| \leq \min(2\kappa + 3\kappa^2, 2) \sqrt{2R_0} \leq 6\kappa \sqrt{R_0}\,.
\end{align*}
\end{proof}

It remains only to prove Theorem~\ref{thm:operator-difference}.

\subsection{Reduction to the finite-dimensional case}

We first show that in order to prove Theorem~\ref{thm:operator-difference}, it suffices to consider the case where $\cF$ is a finite-dimensional Hilbert space. The argument is standard, and uses the fact that $\cF$ has a countable orthonormal basis.
\begin{lemma}\label{lem:reduce-to-finite-dim}
Theorem~\ref{thm:operator-difference} is true for general separable Hilbert spaces $\cF$ if it is true whenever $\cF$ is finite-dimensional.
\end{lemma}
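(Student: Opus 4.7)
The plan is to approximate $A_t$ and $B_t$ by composition with finite-rank orthogonal projections, invoke the assumed finite-dimensional case of Theorem~\ref{thm:operator-difference} on the projected operators, and then pass to the limit using strong operator convergence of the product integrals.

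First, since $\cF$ is separable, we fix a countable orthonormal basis $(e_k)_{k \ge 1}$ and let $\Pi_n$ denote the orthogonal projection onto $\cF_n := \spn(e_1, \ldots, e_n)$. We set $A_t^{(n)} := \Pi_n A_t$ and $B_t^{(n)} := \Pi_n B_t$, which we view as operators $\R^p \to \cF_n \cong \R^n$. Since $\|\Pi_n\|_{\op} \le 1$, we have $\sup_t \|A_t^{(n)} - B_t^{(n)}\| \le \sup_t \|A_t - B_t\|$ and $\cV(\{A_t^{(n)} - B_t^{(n)}\}_{t \in [0,T]}) \le \cV(\{A_t - B_t\}_{t \in [0,T]})$. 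The operator $A_t^{(n)}(A_t^{(n)})^{\top} = \Pi_n A_t A_t^\top \Pi_n$ annihilates $\cF_n^\perp$ and leaves $\cF_n$ invariant, so the product integral $\prod_0^T e^{-A_s^{(n)}(A_s^{(n)})^\top ds}$, viewed as an operator on $\cF$, acts as the identity on $\cF_n^\perp$ and as the bona fide finite-dimensional product integral on $\cF_n$ (and similarly for $B$). Applying the assumed finite-dimensional version of Theorem~\ref{thm:operator-difference} on $\cF_n$ then yields
\begin{align*}
\Big\|\prod_0^T e^{-A_s^{(n)}(A_s^{(n)})^\top ds} - \prod_0^T e^{-B_s^{(n)}(B_s^{(n)})^\top ds}\Big\|_\op \le \big(\sup_t \|A_t - B_t\|\big)\big(2\sqrt{T} + 3T\, \cV(\{A_t - B_t\}_{t \in [0,T]})\big),
\end{align*}
which is the desired bound for the projected operators.

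Next, we want to show the strong operator convergence $\prod_0^T e^{-A_s^{(n)}(A_s^{(n)})^\top ds}\, v \to \prod_0^T e^{-A_s A_s^\top ds}\, v$ for every $v \in \cF$ (and analogously for $B$). Granted this, for any unit $v$ the norm $\big\|(\prod_0^T e^{-A_s A_s^\top ds} - \prod_0^T e^{-B_s B_s^\top ds}) v\big\|$ equals the limit of the corresponding norm for the projected operators, which is bounded by the RHS above -- so the operator-norm bound transfers to $\cF$, completing the reduction. To prove the strong convergence, we fix $v \in \cF$, set $u_t := \prod_0^t e^{-A_s A_s^\top ds} v$ and $u_t^{(n)} := \prod_0^t e^{-A_s^{(n)}(A_s^{(n)})^\top ds} v$, and use the integral equation of Lemma~\ref{lem:product-integral-formulation-of-dynamics} to decompose
\begin{align*}
u_t^{(n)} - u_t &= \int_0^t \Pi_n A_s A_s^\top \Pi_n (u_s - u_s^{(n)})\, ds + \int_0^t (I - \Pi_n) A_s A_s^\top u_s\, ds + \int_0^t \Pi_n A_s A_s^\top (I - \Pi_n) u_s\, ds.
\end{align*}
Applying Gronwall's inequality with $M := \sup_{s \in [0,T]} \|A_s A_s^\top\| < \infty$ (finite by continuity on a compact interval) reduces the problem to showing that the last two integrals tend to zero uniformly in $t \in [0,T]$. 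This follows because the sets $\{A_s A_s^\top u_s : s \in [0,T]\}$ and $\{u_s : s \in [0,T]\}$ are compact subsets of $\cF$ (continuous images of the compact interval $[0,T]$), and $\Pi_n \to I$ uniformly on compact subsets of $\cF$ by separability.

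The main obstacle is this final strong-convergence step: the algebraic decomposition above and the Gronwall estimate are routine in spirit, but care is needed to extract compactness of the trajectories $\{u_s\}_{s\in[0,T]}$ and $\{A_sA_s^\top u_s\}_{s\in[0,T]}$ so that the uniform convergence $\Pi_n \to I$ on compact sets applies. The remaining ingredients (monotonicity of the variation norm under $\Pi_n$ and $\Pi_n$-invariance of $\cF_n$) are elementary.
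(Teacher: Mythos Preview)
Your proof is correct, but it takes a different route from the paper's. The paper uses Duhamel's formula (Theorem~3.5.8 in \cite{dollard_friedman_1984}) to directly bound
\[
\Big\|\prod_0^T e^{-A_sA_s^\top ds}-\prod_0^T e^{-\cP_nA_sA_s^\top\cP_n^\top ds}\Big\|
\;\le\;\int_0^T \|A_\tau A_\tau^\top-\cP_nA_\tau A_\tau^\top\cP_n^\top\|\,d\tau,
\]
and then argues that the integrand tends to $0$ in operator norm. This last step works because $A_\tau A_\tau^\top$ has rank at most $p$ (the domain is $\R^p$), so it is compact and $\cP_n M\cP_n\to M$ in norm; dominated convergence then gives norm convergence of the product integrals themselves, which is stronger than what is needed.

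Your argument instead establishes only \emph{strong} convergence of the projected product integrals, via the integral equation and Gronwall, and observes that strong convergence already suffices to transfer an operator-norm bound. This is slightly longer but more self-contained: it avoids invoking Duhamel's formula, and it does not rely on the finite-rank (hence compact) structure of $A_\tau A_\tau^\top$, so it would survive verbatim if the parameter space $\R^p$ were replaced by an infinite-dimensional space. Conversely, the paper's Duhamel argument is shorter and yields the stronger conclusion of norm convergence of the approximating product integrals.
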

\begin{proof}
Suppose that $\cF$ is a separable Hilbert space and $A_t, B_t : \R^p \to \cF$ satisfy the hypotheses of Theorem~\ref{thm:operator-difference}.
Let $\{f_i\}_{i \in \N}$ be a countable orthonormal basis for $\cF$, which is guaranteed by separability of $\cF$. For any $n$, let $\cP_n : \cF \to \cF$ be the linear projection operator defined by $$\cP_n(f_i) = \begin{cases} f_i, & 1 \leq i \leq n \\ 0, & \mbox{otherwise} \end{cases}.$$
By (a) Duhamel's formula in Theorem 3.5.8 of \cite{dollard_friedman_1984}, (b) the fact that $\|e^{-A_sA_s^{\top}}\| \leq 1$ and $\|e^{-\cP_nA_sA_s^{\top}\cP_n^{\top}}\| \leq 1$ because $A_sA_s^{\top}$ and $\cP_nA_sA_s^{\top}\cP_n^{\top}$ are positive semidefinite.
\begin{align}
\|\prod_{0}^T& e^{-A_s A_s^{\top} ds} - \prod_{0}^T e^{-\cP_nA_sA_s^{\top}\cP_n^{\top} ds}\| \nonumber \\
&\stackrel{(a)}{=} \|\int_{0}^T \left(\prod_{\tau}^T e^{-A_s A_s^{\top} ds}\right) (A_{\tau} A_{\tau}^{\top} - \cP_nA_{\tau}A_{\tau}^{\top}\cP_n^{\top}) \left(\prod_{0}^{\tau} e^{-\cP_nA_sA_s^{\top}\cP_n^{\top} ds}\right) d\tau\| \nonumber \\
&\leq \int_{0}^T \|\prod_{\tau}^T e^{-A_s A_s^{\top} ds}\| \|A_{\tau} A_{\tau}^{\top} - \cP_nA_{\tau}A_{\tau}^{\top}\cP_n^{\top}\| \|\prod_{0}^{\tau} e^{-\cP_nA_sA_s^{\top}\cP_n^{\top} ds}\| d\tau \nonumber \\
&\stackrel{(b)}{\leq} \int_{0}^T \|A_{\tau} A_{\tau}^{\top} - \cP_nA_{\tau}A_{\tau}^{\top}\cP_n^{\top}\| d\tau \label{eq:duhamel-bound}
\end{align}
We have chosen $\cP_n$ so that for any bounded linear operator $M : \cF \to \cF$, we have $\lim_{n \to \infty} \cP_n M \cP_n = M$. Since $\tau \mapsto A_{\tau} A_{\tau}^{\top}$ is continuous in $\tau$, and $A_{\tau}$ is bounded for each $\tau$, the expression in \eqref{eq:duhamel-bound} converges to 0 as $n \to\infty$. By triangle inequality, we conclude that
\begin{align*}
\|\prod_{0}^T e^{-A_s A_s^{\top} ds} - \prod_{0}^T e^{-B_s B_s^{\top} ds}\| \leq \limsup_{n \to\infty}\|\prod_{0}^T e^{-\cP_n A_s A_s^{\top} \cP_n^{\top} ds} - \prod_{0}^T e^{-\cP_n B_s B_s^{\top}\cP_n^{\top} ds}\|\,.
\end{align*}
Notice that $\cP_n A_t$ and $\cP_n B_t$ are bounded maps from $\R^p$ to $\mathrm{span}\{f_1,\ldots,f_n\}$, and $t \mapsto \cP_n A_t$ and $t \mapsto \cP_n B_t$ are continuous and bounded. So using the theorem in the case where $\cF$ is finite-dimensional, the right-hand side can be bounded by
\begin{align*}
\limsup_{n \to\infty}&\|\prod_{0}^T e^{-\cP_n A_s A_s^{\top} \cP_n^{\top} ds} - \prod_{0}^T e^{-\cP_n B_s B_s^{\top}\cP_n^{\top} ds}\| \\
&\leq \limsup_{n \to\infty} (\sup_{t \in [0,T]} \|\cP_n A_t - \cP_n B_t\|) (2\sqrt{T} + 3 T \cdot \cV(\{\cP_n A_{t} - \cP_nB_{t}\}_{t \in [0,T]}) dt) \\
&= (\sup_{t \in [0,T]} \|A_t - B_t\|) (2\sqrt{T} + 3 T \cdot \cV(\{A_{t} - B_{t}\}_{t \in [0,T]}) dt)\,.
\end{align*}
\end{proof}

\subsection{Bound for the finite-dimensional case}

We conclude by proving Theorem~\ref{thm:operator-difference} in the finite-dimensional case, where we write $A_t, B_t \in \R^{n \times d}$ as time-indexed matrices. In order to prove a bound, we will interpolate between the dynamics we wish to compare. Let
\begin{align*}
C_t = B_t - A_t \in \R^{n \times d}
\end{align*}
For any $\zeta \in [0,1]$ and $t \in [0,T]$, we define the kernel
\begin{align*}
K_{t,\zeta} = (A_t + \zeta C_t)(A_t + \zeta C_t)^{\top} \in \R^{n \times n}.
\end{align*}
This interpolates between the kernels of interest, since on the one hand, $K_{t,0} = A_t A_t^{\top}$ and on the other $K_{t,1} = B_t B_t^{\top}$. For any $x,y \in [0,T]$, let
$$P(y,x; \zeta) = \prod_{x}^y e^{-K_{t,\zeta} dt} \in \R^{n \times n}.$$
The derivative $\pd{}{\zeta} P(y,x;\zeta)$ exists by Theorem 1.5.3 of \cite{dollard_friedman_1984}, since (i) $K_{t,\zeta}$ is continuous in $t$ for each fixed $\zeta$, (ii) $K_{t,\zeta}$ is differentiable in $\zeta$ in the $L^1$ sense\footnote{For any $\zeta \in [0,1]$, $\lim_{\zeta' \to \zeta} \int_{0}^T \|\frac{K_{t,\zeta'} - K_{t,\zeta}}{\zeta' - \zeta} - \pd{K_{t,\zeta}}{\zeta}\| dt = 0$, since the matrices $A_t,B_t$ are uniformly bounded.}, and (iii) has a partial derivative $\pd{K_{t,\zeta}}{\zeta}$ that is integrable in $t$. The formula for $\pd{}{\zeta} P(y,x;\zeta)$ is given by the following formula, which generalizes the integral representation of the exponential map:\footnote{The proof is a consequence of Duhamel's formula. This a tool used in a variety of contexts, including perturbative analysis of path integrals in quantum mechanics \cite{dollard_friedman_1984}.}
\begin{align}\label{eq:derivative-wrt-parameter}
\pd{}{\zeta} P(y,x; \zeta) = -\int_{x}^y P(y,t; \zeta) \pd{K_{t,\zeta}}{\zeta} P(t,x; \zeta) dt
\end{align}

Our main lemma is:
\begin{lemma}For all $\zeta \in [0,1]$, we have the bound $$\|\pd{P(T,0;\zeta)}{\zeta}\| \leq (\sup_{t \in [0,T]} \|C_t\|) (2\sqrt{T} + 3 T \cdot \cV(\{C_t\}_{t \in [0,T]}))\,.$$\label{lem:derivative-bound}
\end{lemma}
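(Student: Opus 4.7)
The plan is to combine the derivative formula \eqref{eq:derivative-wrt-parameter} with a decay estimate on $\|X_t^\top P(t,0;\zeta)\|$ of order $t^{-1/2}$, where I write $X_t := A_t + \zeta C_t$ so that $K_{t,\zeta} = X_t X_t^\top$. The product rule gives $\pd{K_{t,\zeta}}{\zeta} = C_t X_t^\top + X_t C_t^\top$, and substituting into \eqref{eq:derivative-wrt-parameter} yields
\begin{align*}
\pd{P(T,0;\zeta)}{\zeta} = -\int_0^T P(T,t;\zeta) C_t X_t^\top P(t,0;\zeta)\, dt \;-\; \int_0^T P(T,t;\zeta) X_t C_t^\top P(t,0;\zeta)\, dt.
\end{align*}
I would bound each of the two integrals separately. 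For the first, the contractivity of the product integral (Claim~\ref{claim:P-contractive}) gives $\|P(T,t;\zeta)\| \leq 1$, so its operator norm is at most $\sup_s \|C_s\| \cdot \int_0^T \|X_t^\top P(t,0;\zeta)\|\, dt$. The second integral admits the parallel bound $\sup_s \|C_s\| \cdot \int_0^T \|P(T,t;\zeta) X_t\|\, dt$ by applying contractivity to the other factor.

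The key estimate is then to control $\|X_t^\top P(t,0;\zeta)\|$ and its time-reversed analog $\|P(T,t;\zeta) X_t\|$. If $X_s$ were constant in $s$, we would simply have $\|X_t^\top e^{-X_t X_t^\top t}\| \leq \sqrt{1/(2et)}$ by Claim~\ref{claim:variational-eigenvalue}, which integrates over $t$ to give $\sqrt{2T/e} \leq \sqrt{T}$. In the time-varying case, Claim~\ref{claim:time-dependent-expminusasquaredtimesa} will provide the analog
\begin{align*}
\|X_t^\top P(t,0;\zeta)\| \leq \sqrt{1/(2et)} + O(\cV(\{C_s\}_{s \in [0,t]})),
\end{align*}
with a symmetric bound for $\|P(T,t;\zeta) X_t\|$ in terms of $(T-t)^{-1/2}$ and $\cV(\{C_s\}_{s \in [t,T]})$. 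Integrating each contribution using $\int_0^T \sqrt{1/(2et)}\, dt = \sqrt{2T/e}$ and a parallel substitution for the second integral, then summing the two terms, delivers the claimed bound $\sup_s\|C_s\|(2\sqrt{T} + 3T \cdot \cV(\{C_s\}_{s \in [0,T]}))$ after verifying the numerical constants (since $2\sqrt{2/e} < 2$).

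The main obstacle is Claim~\ref{claim:time-dependent-expminusasquaredtimesa} itself. The natural strategy is to approximate the time-varying product integral $P(t,0;\zeta)$ by the frozen exponential $e^{-X_t X_t^\top t}$ via Claim~\ref{claim:technical-center}, and then invoke Claim~\ref{claim:variational-eigenvalue}. The subtle point is that the replacement error must be controlled using only $\cV(\{C_s\})$ and not $\cV(\{A_s\})$ or $\cV(\{X_s\})$, either of which could be unboundedly large even when $\cV(\{C_s\})$ is small. This is the content of the telescoping argument in Claim~\ref{claim:tech-telescope-asymm}, whose prototype statement $\|e^{-(X+Y)(X+Y)^\top t}(X+Y) - e^{-XX^\top t} X\| \leq 3\|Y\|$ is designed to isolate exactly the perturbative direction $Y$ (the analog of $C_s$) while absorbing the common base $X$ (the analog of $A_s$) into contractive product-integral factors. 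Establishing this telescoping bound, which is the nontrivial heart of the argument, is where I would spend the most effort.
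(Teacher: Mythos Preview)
Your proposal is correct and matches the paper's proof essentially step for step: the same splitting of $\pd{K_{t,\zeta}}{\zeta}$ into two terms, contractivity (Claim~\ref{claim:P-contractive}) on one factor, the decay estimate of Claim~\ref{claim:time-dependent-expminusasquaredtimesa} on the other, and the final combination via $\cV(\{C_s\}_{s\in[0,t]})+\cV(\{C_s\}_{s\in[t,T]})=\cV(\{C_s\}_{s\in[0,T]})$. The only cosmetic differences are that you use the sharper constant $\sqrt{1/(2et)}$ where the paper uses $1/(2\sqrt{t})$, and you label as ``first'' what the paper calls $M_2$.

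One remark on the concern you raise at the end: you are right that the paper's statement of Claim~\ref{claim:time-dependent-expminusasquaredtimesa} has $\cV(\{C_s\})$ on the right-hand side, while its proof invokes Claim~\ref{claim:technical-center} with $X_s=A_s+\zeta C_s$, which literally yields $\cV(\{X_s\})$. The paper does not explicitly bridge this, and in the full generality of Theorem~\ref{thm:operator-difference} the two quantities need not agree. In the actual application (proof of Theorem~\ref{thm:ntk-train-error}) one has $B_t$ constant, so $\cV(\{X_s\})=(1-\zeta)\cV(\{A_s\})=(1-\zeta)\cV(\{C_s\})\le\cV(\{C_s\})$ and the issue dissolves; but your instinct that this step deserves care is well placed.
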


This lemma suffices to prove Theorem~\ref{thm:operator-difference}.
\begin{proof}[Proof of Theorem~\ref{thm:operator-difference}]
Using the fundamental theorem of calculus,
\begin{align*}
\|\prod_{0}^T e^{-A_tA_t^{\top} dt} - \prod_{0}^T e^{-B_tB_t^{\top} dt}\| = \|P(T,0;1) - P(T,0;0)\| \leq \int_{0}^1 \|\frac{\partial P(T,0;\zeta)}{\partial \zeta}\| d\zeta,
\end{align*}
which combined with Lemma~\ref{lem:derivative-bound} proves Theorem~\ref{thm:operator-difference}.
\end{proof}

\subsection{Proof of Lemma~\ref{lem:derivative-bound}}
By a direct calculation,
\begin{align*}
\pd{K_{t,\zeta}}{\zeta} = (A_t + \zeta C_t) C_t^{\top} + C_t (A_t + \zeta C_t)^{\top},
\end{align*}
so, by \eqref{eq:derivative-wrt-parameter},
\begin{align*}
\pd{}{ \zeta } P(T,0; \zeta) &= -\int_{0}^T P(T,t; \zeta) ((A_t + \zeta C_t) C_t^{\top} + C_t (A_t + \zeta C_t)^{\top}) P(t,0; \zeta) dt \\
&= -\underbrace{\int_{0}^T P(T,t; \zeta) (A_t + \zeta C_t) C_t^{\top} P(t,0; \zeta) dt}_{M_1} - \underbrace{\int_{0}^T P(T,t; \zeta) C_t (A_t + \zeta C_t)^{\top} P(t,0; \zeta) dt}_{M_2}.
\end{align*}

The arguments are similar for bounding $M_1$ and $M_2$, so we only bound $M_1$. We will need two technical bounds, whose proofs are deferred to Section~\ref{ssec:technical-ntk}.

\begin{restatable}{claim}{claimPcontractive}\label{claim:P-contractive}
For any $0 \leq t \leq T$, we have
$\|P(t,0; \zeta )\| \leq 1$.
\end{restatable}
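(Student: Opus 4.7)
The plan is to exploit the product-integral definition of $P(t,0;\zeta)$ from Lemma~\ref{lem:product-integral-formulation-of-dynamics} together with the fact that each integrand $K_{s,\zeta}$ is positive semidefinite by construction. Since $K_{s,\zeta} = (A_s + \zeta C_s)(A_s + \zeta C_s)^\top$ is of the form $M M^\top$, it is PSD, and so every eigenvalue of $e^{-\delta K_{s,\zeta}}$ lies in $(0,1]$. In particular, each such factor is a contraction in operator norm.

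First, I would write
\begin{align*}
P(t,0;\zeta) = \lim_{m \to \infty} \prod_{j=1}^m e^{-\delta K_{s_j,\zeta}}
\end{align*}
with $\delta = t/m$ and $s_j = (j/m)t$, invoking Lemma~\ref{lem:product-integral-formulation-of-dynamics} applied to the kernel family $\{K_{s,\zeta}\}_{s \in [0,T]}$ (continuity and boundedness in $s$ for fixed $\zeta$ follow exactly as in Lemmas~\ref{lem:Ktbounded} and \ref{lem:Ktcontinuous}, using that $A_s, C_s$ are continuous and bounded in $s$). Then, by submultiplicativity of the operator norm,
\begin{align*}
\Bigl\| \prod_{j=1}^m e^{-\delta K_{s_j,\zeta}} \Bigr\| \leq \prod_{j=1}^m \bigl\| e^{-\delta K_{s_j,\zeta}} \bigr\| \leq 1,
\end{align*}
since $K_{s_j,\zeta} \succeq 0$ implies $\|e^{-\delta K_{s_j,\zeta}}\| \leq 1$. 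Passing to the limit $m \to \infty$ and using continuity of the operator norm gives $\|P(t,0;\zeta)\| \leq 1$.

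There is no real obstacle here: the only subtlety is checking that the product integral formulation of Lemma~\ref{lem:product-integral-formulation-of-dynamics} applies with the $\zeta$-parameterized kernel $K_{s,\zeta}$ rather than the original $K_s$, and this follows verbatim because $K_{s,\zeta}$ inherits continuity and boundedness from $A_s$ and $C_s = B_s - A_s$. The positive-semidefiniteness of $K_{s,\zeta}$ (which is automatic from its outer-product form) is exactly what makes the bound tight at $1$.
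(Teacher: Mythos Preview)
Your proposal is correct and follows essentially the same approach as the paper: both use the product-integral representation as a limit of finite products, observe that each factor $e^{-\delta K_{s_j,\zeta}}$ has operator norm at most $1$ because $K_{s_j,\zeta}$ is positive semidefinite, and conclude via submultiplicativity. Your write-up is simply more explicit about the justifications (applicability of Lemma~\ref{lem:product-integral-formulation-of-dynamics} to the $\zeta$-kernel and passage to the limit), but the argument is identical in substance.
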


\begin{restatable}{claim}{claimproductintegraltimesmatrix}\label{claim:time-dependent-expminusasquaredtimesa}
For any $0\leq t \leq T$, we have $\|P(T,t; \zeta) (A_t +  \zeta  C_t)\| \leq \frac{1}{2\sqrt{T-t}} + 3 \cV(\{C_s\}_{s \in [t,T]})$.
\end{restatable}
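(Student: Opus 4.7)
My plan is to reduce $\|P(T,t;\zeta)\tilde A_t\|$ (writing $\tilde A_s := A_s + \zeta C_s$ for brevity) to a spectral estimate on a single matrix exponential, by collapsing the product integral using the telescoping machinery developed in Claims~\ref{claim:tech-telescope-symm}, \ref{claim:tech-telescope-asymm}, \ref{claim:technical-center}, and then finishing with the spectral bound of Claim~\ref{claim:variational-eigenvalue}.

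Concretely, I would first express $P(T,t;\zeta)\tilde A_t$ as a limit of discrete products $e^{-\delta K_{s_m,\zeta}}\cdots e^{-\delta K_{s_1,\zeta}}\tilde A_{s_0}$ along fine partitions $t=s_0<s_1<\cdots<s_m=T$, and then invoke Claim~\ref{claim:technical-center} to collapse the product into a single matrix exponential of the form $e^{-XX^\top(T-t)}X$ for an appropriately chosen reference matrix $X$. The underlying mechanism is the asymmetric telescoping identity of Claim~\ref{claim:tech-telescope-asymm}, which exchanges an adjacent (exponential, matrix) pair at a cost $3\|Y\|$ that is independent of the time interval. Iterating this swap along the partition collapses the product down to the single exponential, with cumulative approximation error controlled by $3\cV(\{C_s\}_{s\in[t,T]})$.

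Second, I would apply Claim~\ref{claim:variational-eigenvalue} (which I anticipate gives $\|e^{-XX^\top(T-t)}X\|\leq \frac{1}{2\sqrt{T-t}}$) to bound the remaining single-exponential term. A triangle inequality combining this main spectral term with the telescoping error then delivers the claimed bound $\frac{1}{2\sqrt{T-t}} + 3\cV(\{C_s\}_{s\in[t,T]})$.

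The main technical obstacle is arranging the telescoping so the cumulative cost involves only $\cV(\{C_s\})$ and not the potentially larger $\cV(\{\tilde A_s\}) \leq \cV(\{A_s\}) + \cV(\{C_s\})$. A naive iteration of Claim~\ref{claim:tech-telescope-asymm} would pick up the variation of $\tilde A_s$, which could contain an uncontrolled $\cV(\{A_s\})$ contribution. Resolving this requires organizing the swaps so the $A_s$-differences cancel between consecutive steps, exploiting the symmetric appearance of $A_s$ on both sides of each exchange; this cancellation is precisely what the careful $O(\|Y\|)$ exchange costs of Claims~\ref{claim:tech-telescope-symm} and \ref{claim:tech-telescope-asymm} (packaged in Claim~\ref{claim:technical-center}) are designed to enable.
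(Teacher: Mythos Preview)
Your overall approach---triangle inequality between $P(T,t;\zeta)\tilde A_t$ and the single exponential $e^{-\tilde A_T\tilde A_T^\top(T-t)}\tilde A_T$, invoking Claim~\ref{claim:technical-center} for the difference and Claim~\ref{claim:variational-eigenvalue} for the spectral bound---is exactly what the paper does; its proof is a two-line application of those two claims.

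Your final paragraph, however, misdiagnoses the situation. Applying Claim~\ref{claim:technical-center} with $X_s=\tilde A_s=A_s+\zeta C_s$ yields $3\,\cV(\{\tilde A_s\}_{s\in[t,T]})$, not $3\,\cV(\{C_s\}_{s\in[t,T]})$, and there is no hidden cancellation of ``$A_s$-differences'' in Claims~\ref{claim:tech-telescope-symm}, \ref{claim:tech-telescope-asymm}, \ref{claim:technical-center}: the telescoping in Claim~\ref{claim:technical-center} literally accumulates $\sum_k 3\|X_{t_k}-X_{t_{k-1}}\|$, which is the variation of $X$ itself. So the mechanism you are hoping for does not exist. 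In fact the paper's own proof has the same slip: it invokes Claim~\ref{claim:technical-center} with $X_s=\tilde A_s$ and then writes $\cV(\{C_s\})$ in the conclusion without further comment. What the argument actually delivers, for general $\{A_s\}$, is the bound with $\cV(\{\tilde A_s\}_{s\in[t,T]})$ in place of $\cV(\{C_s\}_{s\in[t,T]})$. This is harmless for the paper's main theorem, since in that application $B_s\equiv Dh(\bw_0)$ is constant, so $\tilde A_s=(1-\zeta)A_s+\zeta B_s$ satisfies $\cV(\{\tilde A_s\})=(1-\zeta)\cV(\{A_s\})=(1-\zeta)\cV(\{C_s\})\le\cV(\{C_s\})$. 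But you should not expect to recover the stated $\cV(\{C_s\})$ bound in full generality by any rearrangement of the existing claims.
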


Using (a) Claim~\ref{claim:P-contractive}, and (b) Claim~\ref{claim:time-dependent-expminusasquaredtimesa},
\begin{align}\label{eq:M1-first-bound}
\|M_1\| &\stackrel{(a)}{\leq} (\sup_{t \in [0,T]} \|C_t\|) \int_{0}^T \|P(T,t; \zeta) (A_t +  \zeta  C_t)\| dt\, \\
&\stackrel{(b)}{\leq} (\sup_{t \in [0,T]} \|C_t\|) \left(\int_{0}^T \frac{1}{2\sqrt{T-t}} + 3 \cV(\{C_s\}_{s \in [t,T]}) dt\right) \\
&= (\sup_{t \in [0,T]} \|C_t\|) (\sqrt{T} + 3 \int_{0}^T \cV(\{C_s\}_{s \in [t,T]}) dt)\,.
\end{align}
Lemma~\ref{lem:derivative-bound} is proved by noting that, symmetrically,
\begin{align*}
\|M_2\| &\leq (\sup_{t \in [0,T]} \|C_t\|) (\sqrt{T} + 3 \int_{0}^T \cV(\{C_s\}_{s \in [0,t]}) dt)\,,
\end{align*}
and, for any $t \in [0,T]$,
\begin{align*}
\cV(\{C_s\}_{s \in [0,t]}) + \cV(\{C_s\}_{s \in [t,T]}) = \cV(\{C_s\}_{s \in [0,T]})\,.
\end{align*}

\subsection{Deferred proofs of Claims~\ref{claim:P-contractive} and \ref{claim:time-dependent-expminusasquaredtimesa}}\label{ssec:technical-ntk}

\claimPcontractive*
\begin{proof}
This follows from the definition of the product integral as an infinite product, and the fact that each term $e^{-\delta K_{t_i,\zeta}}$ in the product has norm at most 1 because $K_{t_i,\zeta}$ is positive semidefinite.
\end{proof}

In order to prove Claim~\ref{claim:time-dependent-expminusasquaredtimesa}, we need two more claims:
\begin{claim}\label{claim:variational-eigenvalue}
For any $X \in \R^{n \times d}$ and $t \geq 0$,
\begin{align*}
\|e^{-XX^{\top}t} X\| \leq\frac{1}{2\sqrt{t}}
\end{align*}
\end{claim}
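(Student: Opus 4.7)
The plan is to diagonalize, reduce to a scalar optimization over eigenvalues, and then observe that the resulting bound is a well-known consequence of calculus.

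First, I would use the identity $\|M\|^2 = \|MM^\top\|$ applied to $M = e^{-XX^\top t} X$, which gives
\[
\|e^{-XX^\top t} X\|^2 = \|e^{-XX^\top t} X X^\top e^{-XX^\top t}\|.
\]
Since $e^{-XX^\top t}$ is a function of $XX^\top$, it commutes with $XX^\top$, so the right-hand side equals $\|XX^\top e^{-2XX^\top t}\|$.

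Next, $XX^\top$ is symmetric positive semidefinite, so by the spectral theorem (functional calculus) the operator norm $\|XX^\top e^{-2XX^\top t}\|$ is obtained by applying the scalar function $\lambda \mapsto \lambda e^{-2\lambda t}$ to the eigenvalues of $XX^\top$ and taking the largest value. Thus
\[
\|e^{-XX^\top t} X\|^2 \;\leq\; \sup_{\lambda \geq 0} \lambda e^{-2\lambda t}.
\]
A one-variable calculus computation (set the derivative $e^{-2\lambda t}(1 - 2\lambda t)$ to zero) shows the supremum is attained at $\lambda^* = 1/(2t)$ with value $\frac{1}{2et}$.

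Finally, taking square roots yields $\|e^{-XX^\top t} X\| \leq \frac{1}{\sqrt{2et}}$, and since $2e > 4$ we have $\frac{1}{\sqrt{2et}} \leq \frac{1}{2\sqrt{t}}$, giving the claim. The only mild subtlety is handling the $t = 0$ edge case (where the bound is vacuous/infinite on the right and the left-hand side is simply $\|X\|$, but the claim is usually stated for $t > 0$ in context); apart from that, every step is a routine application of the spectral theorem plus scalar optimization, so I do not anticipate any substantive obstacle.
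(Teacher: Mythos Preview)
Your proof is correct and follows essentially the same route as the paper's own proof: both use $\|M\|^2=\|MM^\top\|$, reduce via the spectral theorem to the scalar optimization $\sup_{\lambda\ge 0}\lambda e^{-2\lambda t}$, and conclude with $\tfrac{1}{\sqrt{2et}}\le\tfrac{1}{2\sqrt t}$. The only cosmetic difference is that you explicitly invoke commutativity to rewrite $e^{-XX^\top t}XX^\top e^{-XX^\top t}$ as $XX^\top e^{-2XX^\top t}$, whereas the paper leaves it in the symmetric form.
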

\begin{proof}
Since $XX^{\top}$ is positive semidefinite,
\begin{align*}
\|e^{-XX^{\top}} X\| &= \sqrt{\|e^{-XX^{\top}t} XX^{\top} e^{-XX^{\top}t}\|} \leq \sup_{\lambda \geq 0} \sqrt{e^{-\lambda t} \lambda e^{-\lambda t}} = \sup_{\lambda \geq 0} e^{-\lambda} \sqrt{\lambda / t} \leq \frac{1}{2\sqrt{t}}.
\end{align*}
\end{proof}

\begin{restatable}{claim}{claimtechnicalcenter}
\label{claim:technical-center}
For any time-indexed sequence of matrices $(X_t)_{t \in [0,T]}$ in $\R^{n \times d}$ such that $t \mapsto X_t$ is continuous in $[0,T]$, and any $0 \leq a \leq b \leq T$,
\begin{align*}
\|e^{-X_bX_b^{\top} (b-a)} X_b - \left(\prod_{a}^b e^{-X_s X_s^{\top} ds}\right) X_a\| \leq 3 \cV(\{X_s\}_{s \in [a,b]})
\end{align*}
\end{restatable}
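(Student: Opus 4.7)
The plan is to discretize the product integral, set up a telescoping interpolation between the two quantities being compared, and bound each telescoping increment using Claim~\ref{claim:tech-telescope-asymm}. Concretely, for each $m\geq 1$ I take the uniform partition $s_k = a + k\delta$ with $\delta = (b-a)/m$, so that $s_0 = a$ and $s_m = b$, and consider the Riemann product $P_m := \prod_{j=1}^m e^{-\delta X_{s_j} X_{s_j}^{\top}}$, which converges in operator norm to $\prod_a^b e^{-X_s X_s^{\top} ds}$ by Lemma~\ref{lem:product-integral-formulation-of-dynamics}.

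The key construction is the interpolating sequence
\begin{align*}
V_k \ := \ \left(\prod_{j=k+1}^m e^{-\delta X_{s_j} X_{s_j}^{\top}}\right)\, e^{-(s_k-a)\, X_{s_k} X_{s_k}^{\top}}\, X_{s_k}, \qquad k = 0, 1, \dots, m,
\end{align*}
whose endpoints are $V_0 = P_m X_a$ and $V_m = e^{-(b-a) X_b X_b^{\top}} X_b$, exactly the two quantities in the claim (up to taking $m\to\infty$ in the first). Using $s_k - a = \delta + (s_{k-1}-a)$ and the fact that the two commuting exponentials $e^{-\delta X_{s_k} X_{s_k}^\top}$ and $e^{-(s_{k-1}-a) X_{s_k} X_{s_k}^\top}$ split, I will factor the increment as
\begin{align*}
V_k - V_{k-1} = \left(\prod_{j=k+1}^m e^{-\delta X_{s_j} X_{s_j}^{\top}}\right) e^{-\delta X_{s_k} X_{s_k}^{\top}} \Bigl[e^{-(s_{k-1}-a) X_{s_k} X_{s_k}^{\top}} X_{s_k} - e^{-(s_{k-1}-a) X_{s_{k-1}} X_{s_{k-1}}^{\top}} X_{s_{k-1}}\Bigr].
\end{align*}
Every exponential prefix has operator norm at most $1$ because each exponent $-X_{s_j} X_{s_j}^{\top}\cdot(\text{positive})$ is negative semidefinite. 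Then I apply Claim~\ref{claim:tech-telescope-asymm} with $X = X_{s_{k-1}}$, $Y = X_{s_k} - X_{s_{k-1}}$, and $t = s_{k-1}-a \geq 0$ to bound the bracketed expression by $3\|X_{s_k} - X_{s_{k-1}}\|$.

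Summing the telescope and using the definition of total variation gives
\begin{align*}
\|V_m - V_0\| \le \sum_{k=1}^m \|V_k - V_{k-1}\| \le 3\sum_{k=1}^m \|X_{s_k} - X_{s_{k-1}}\| \le 3\,\cV(\{X_s\}_{s\in[a,b]}),
\end{align*}
and sending $m \to \infty$ (with $V_m$ independent of $m$ and $V_0 \to (\prod_a^b e^{-X_s X_s^{\top} ds})X_a$ in operator norm) yields the desired inequality. I do not anticipate a real obstacle: the telescoping identity is algebraic, the contractivity of each prefix is immediate from positive semidefiniteness, and the heavy lifting has been outsourced to Claim~\ref{claim:tech-telescope-asymm}. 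The only mildly delicate point is ensuring that the partition-based products $P_m$ converge to the product integral in the appropriate operator-norm sense, which is exactly what Lemma~\ref{lem:product-integral-formulation-of-dynamics} provides under the continuity of $t\mapsto X_t$ assumed in the claim.
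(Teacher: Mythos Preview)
Your proposal is correct and follows essentially the same approach as the paper: your interpolants $V_k$ coincide with the paper's $P_{m,k}X_{t_{m,k}}$, and the telescoping step, the contractivity of the exponential prefixes, and the application of Claim~\ref{claim:tech-telescope-asymm} with $t=(k-1)\delta$ are all identical. One cosmetic remark: Lemma~\ref{lem:product-integral-formulation-of-dynamics} is stated for the specific NTK kernel $K_t$, so for the convergence $P_m \to \prod_a^b e^{-X_sX_s^{\top}ds}$ it is cleaner to cite the underlying result from \cite{dollard_friedman_1984} directly (as the paper does), which applies to any continuous bounded $s\mapsto X_sX_s^{\top}$.
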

This latter claim shows that we can approximate the product integral with an exponential. The proof is involved, and is provided in Section~\ref{ssec:technical-center}. Assuming the previous two claims, we may prove Claim~\ref{claim:time-dependent-expminusasquaredtimesa}.

\claimproductintegraltimesmatrix*
\begin{proof}
By Claim~\ref{claim:variational-eigenvalue}, since $K_{T,\zeta} = (A_T + \zeta C_T)(A_T + \zeta C_T)^{\top}$,
\begin{align*}
\|e^{-K_{T, \zeta }(T-t)}(A_T +  \zeta C_T)\| \leq \frac{1}{2\sqrt{T-t}}\,.
\end{align*}
By the triangle inequality, it remains to prove 
\begin{align*}
\|e^{-K_{T, \zeta }(T-t)} (A_T +  \zeta C_T) - P(T,t; \zeta )(A_t +  \zeta C_t)^{\top}\| \leq 3 \cV(\{C_s\}_{s \in [t,T]}),
\end{align*}
and this is implied by Claim~\ref{claim:technical-center}, defining $X_t = A_t + \zeta C_t$ and $a = t$, $b = T$.
\end{proof}

\subsection{Deferred proof of Claim~\ref{claim:technical-center}}\label{ssec:technical-center}

The proof will be by interpolation, using the integral representation of the exponential map provided in \eqref{eq:derivative-wrt-parameter}, similarly to the main body of the proof, but of course interpolating with respect to a different parameter. We begin the following claim, which we will subsequently strengthen.
\begin{claim}\label{claim:tech-telescope-symm}
For any symmetric $X,Y \in \R^{n \times n}$ and $t \geq 0$,
\begin{align*}
\|e^{-(X+Y)^2 t}(X+Y) - e^{-X^2 t} X\| \leq 3\|Y\|
\end{align*}
\end{claim}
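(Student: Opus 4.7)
My plan is to use a Fourier / Hubbard--Stratonovich representation of $e^{-B^2 t}$, reducing the bound to estimates on the unitary matrix exponentials $e^{izB}$ --- a trick that works precisely because $X,Y$ are symmetric. Starting from the scalar identity $e^{-\lambda^2 t} = \frac{1}{\sqrt{4\pi t}} \int e^{-z^2/(4t)} \cos(z\lambda)\,dz$ and differentiating in $\lambda$ gives $\lambda e^{-\lambda^2 t} = \frac{1}{2t\sqrt{4\pi t}} \int z \sin(z\lambda) e^{-z^2/(4t)}\,dz$. By the spectral theorem, for any symmetric $B \in \R^{n \times n}$,
\[
e^{-B^2 t} B \;=\; \frac{1}{2t\sqrt{4\pi t}} \int_{-\infty}^{\infty} z\,\sin(zB)\,e^{-z^2/(4t)}\,dz.
\]

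Next I would prove that $B \mapsto \sin(zB)$ is $|z|$-Lipschitz in operator norm over symmetric $B$. Since $X$ and $X+Y$ are symmetric, $e^{izX}$ and $e^{iz(X+Y)}$ are unitary (hence of operator norm $1$), so Duhamel's formula (as used in the proof of Lemma~\ref{lem:intuition}) applied along the interpolation $X + \zeta Y$ gives $\|e^{iz(X+Y)} - e^{izX}\| \leq |z|\|Y\|$. Using $\sin(zA) = \tfrac{1}{2i}(e^{izA} - e^{-izA})$ and the triangle inequality then yields $\|\sin(z(X+Y)) - \sin(zX)\| \leq |z|\|Y\|$.

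Subtracting the Fourier identities for $B = X+Y$ and $B = X$, taking operator norms, and using the Gaussian second moment $\int z^2 e^{-z^2/(4t)}\,dz = 2t\sqrt{4\pi t}$, I would obtain
\[
\|e^{-(X+Y)^2 t}(X+Y) - e^{-X^2 t} X\| \;\leq\; \frac{\|Y\|}{2t\sqrt{4\pi t}} \int z^2 e^{-z^2/(4t)}\,dz \;=\; \|Y\| \;\leq\; 3\|Y\|,
\]
which is in fact strictly stronger than the stated bound. (The case $t = 0$ is handled by inspection: the left-hand side equals $\|Y\|$.) The main obstacle any direct approach must overcome is the appearance of $\|X\|$-dependent factors in a naive Duhamel-style interpolation --- since $(X+Y)^2 - X^2 = XY + YX + Y^2$ naturally introduces a term like $X \cdot Y$ whose operator norm is at most $\|X\|\|Y\|$, and $\|X\|$ is not controlled by the hypotheses. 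The Fourier trick dissolves this by trading the unbounded positive semidefinite matrix $B^2$ for the unitary $e^{izB}$, whose operator norm is $1$ regardless of $\|B\|$; the full $B$-dependence is then absorbed by the Gaussian weight $e^{-z^2/(4t)}$ via the second-moment identity.
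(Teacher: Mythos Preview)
Your proof is correct, and in fact delivers the sharper constant $1$ rather than $3$. The route is genuinely different from the paper's. The paper argues by real interpolation: writing $X(\tau)=X+\tau Y$ and $e^{-X(\tau)^2 t}X(\tau)=e^{-X(\tau)^2 t/2}X(\tau)e^{-X(\tau)^2 t/2}$ (which uses symmetry so that $X(\tau)$ commutes with $e^{-X(\tau)^2 t/2}$), it differentiates in $\tau$ via the integral representation of the exponential derivative, obtaining a leading term bounded by $\|Y\|$ and two ``side'' terms $T_1,T_2$. The potentially dangerous factors of $X(\tau)$ appearing in $T_1,T_2$ are tamed by the scalar bound $\sup_{\lambda}\lambda e^{-\lambda^2 s}=1/\sqrt{es}$, after which an explicit integral in $s$ yields $T_1,T_2\le \|Y\|$ and hence the total $3\|Y\|$.

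Your Fourier/Hubbard--Stratonovich idea replaces this entire calculation by a single identity and one Lipschitz estimate for unitaries. It solves the same structural problem --- eliminating the $\|X\|$-dependence that a naive Duhamel expansion of $e^{-(X+Y)^2 t}-e^{-X^2 t}$ would produce --- but does so by passing to $e^{izB}$ where the operator norm is identically $1$, rather than by exploiting the self-regularizing bound $\|Xe^{-X^2 s}\|\lesssim s^{-1/2}$. The payoffs of your approach are a cleaner argument and a sharper constant; the paper's approach has the minor virtue of staying entirely within real analysis and reusing the same Duhamel-plus-eigenvalue-bound machinery that drives the rest of Appendix~\ref{app:main-proof}. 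Either method plugs into the subsequent doubling trick (Claim~\ref{claim:tech-telescope-asymm}) without change.
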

\begin{proof}
For any $\tau \in [0,1]$, define $X(\tau) = X + \tau Y$, which interpolates between $X$ and $Y$.
Then by (a) the derivative of the exponential map in \eqref{eq:derivative-wrt-parameter}, and (b) the fact that $X'(\tau) = Y$ and $\|e^{-X^2(\tau)t/2}\| \leq 1$,
\begin{align*}
\|&e^{-(X+Y)^2 t}(X+Y) - e^{-X^2 t} X\| \\
&= \|\int_{0}^1 \frac{d}{d\tau}\left(e^{-X^2(\tau) t} X(\tau)\right) d\tau\| \\
&= \|\int_{0}^1 \frac{d}{d\tau}\left(e^{-X^2(\tau) t / 2} X(\tau)  e^{-X^2(\tau) t / 2} \right) d\tau\| \\
&\leq \sup_{\tau \in [0,1]} \|\frac{d}{d\tau}  e^{-X^2(\tau) t / 2} X(\tau) e^{-X^2(\tau) t / 2} \| \\
&\stackrel{(a)}{=} \sup_{\tau \in [0,1]} \|e^{-X^2(\tau) t / 2} X'(\tau) e^{-X^2(\tau) t / 2} \\
&\quad\quad\quad\quad - (t/2)\int_{0}^{1} e^{-(1-s)X^2(\tau) t / 2} (X'(\tau) X(\tau) + X(\tau) X'(\tau))  e^{-s X^2(\tau) t / 2} X(\tau) e^{-X^2(\tau) t / 2} ds \\
&\qquad\qquad - (t/2)\int_{0}^{1} e^{-X^2(\tau) t / 2} X(\tau) e^{-(1-s)X^2(\tau) t / 2} (X'(\tau) X(\tau) + X(\tau) X'(\tau)) e^{-sX^2(\tau) t / 2} ds \| \\
&\stackrel{(b)}{\leq} \sup_{\tau \in [0,1]} \|Y\| + \underbrace{\|(t/2)\int_{0}^{1} e^{-(1-s)X^2(\tau) t / 2} (Y X(\tau) + X(\tau) Y)  e^{-s X^2(\tau) t / 2} X(\tau) e^{-X^2(\tau) t / 2} ds\|}_{T_1} \\
&\qquad\qquad + \underbrace{\|(t/2)\int_{0}^{1} e^{-X^2(\tau) t / 2} X(\tau) e^{-(1-s)X^2(\tau) t / 2} (Y X(\tau) + X(\tau) Y) e^{-sX^2(\tau) t / 2} ds \|}_{T_2},
\end{align*}
and by (a) $\sup_{\lambda \geq 0} \lambda e^{-\lambda^2 t / 2} = 1 / \sqrt{e t}$, and (b) $\|e^{-M}\| \leq 1$ if $M$ is p.s.d.,
\begin{align*}
T_1 &\leq (t/2)\int_{0}^{1} \|e^{-(1-s)X^2(\tau) t / 2} (Y X(\tau) + X(\tau) Y)  e^{-s X^2(\tau) t / 2}\| \|X(\tau) e^{-X^2(\tau) t / 2}\| ds \\
&\stackrel{(a)}{\leq} (t/2)\int_{0}^{1} \|e^{-(1-s)X^2(\tau) t / 2} (Y X(\tau) + X(\tau) Y)  e^{-s X^2(\tau) t / 2}\| \frac{1}{\sqrt{e t}} ds \\
&\leq \frac{\sqrt{t}}{2\sqrt{e}}\int_{0}^{1} \|e^{-(1-s)X^2(\tau) t / 2}\| \|Y\| \| X(\tau) e^{-s X^2(\tau) t / 2}\| + \|e^{-(1-s)X^2(\tau) t / 2} X(\tau)\| \|Y\| \| e^{-s X^2(\tau) t / 2}\| ds \\
&\stackrel{(a)}{\leq} \frac{\sqrt{t}}{2e} \int_{0}^{1} \|e^{-(1-s)X^2(\tau) t / 2}\| \|Y\| \frac{1}{\sqrt{s t}} + \frac{1}{\sqrt{(1-s)t}} \|Y\| \| e^{-s X^2(\tau) t / 2}\| ds \\
&\stackrel{(b)}{\leq} \frac{\|Y\|}{2e}\int_{0}^{1} \frac{1}{\sqrt{s}} + \frac{1}{\sqrt{(1-s)}} ds \\
&= \frac{\|Y\|}{2e} (2\sqrt{s} - 2 \sqrt{1-s})_{0}^1 \\
&\leq \|Y\|.
\end{align*}
Similarly, $T_2 \leq \|Y\|$.
\end{proof}

\begin{claim}\label{claim:tech-telescope-asymm}
For any $X,Y \in \R^{n \times d}$ (not necessarily symmetric)  and $t \geq 0$,
\begin{align*}
\|e^{-(X+Y)(X+Y)^{\top} t}(X+Y) - e^{-X X^{\top} t} X\| \leq 3\|Y\|\,.
\end{align*}
\end{claim}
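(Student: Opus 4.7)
The plan is to reduce this asymmetric statement to the symmetric case already established in Claim~\ref{claim:tech-telescope-symm} via the standard self-adjoint dilation trick. Given $X \in \R^{n \times d}$, define the symmetric (Hermitian dilation) matrix
\begin{align*}
\tilde X = \begin{pmatrix} 0 & X \\ X^{\top} & 0 \end{pmatrix} \in \R^{(n+d) \times (n+d)},
\end{align*}
and similarly $\tilde Y$ from $Y$. A direct block multiplication gives $\tilde X^2 = \mathrm{diag}(XX^{\top}, X^{\top}X)$, so by functional calculus $e^{-\tilde X^2 t} = \mathrm{diag}(e^{-XX^{\top}t}, e^{-X^{\top}X t})$. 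Multiplying these out,
\begin{align*}
e^{-\tilde X^2 t}\tilde X = \begin{pmatrix} 0 & e^{-XX^{\top}t} X \\ e^{-X^{\top}X t} X^{\top} & 0 \end{pmatrix},
\end{align*}
and the analogous identity holds with $X+Y$ in place of $X$ and $\tilde X + \tilde Y$ in place of $\tilde X$.

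The two observations that make the reduction work are: (i) the operator norm of any block-antidiagonal matrix $\begin{pmatrix} 0 & A \\ B & 0 \end{pmatrix}$ equals $\max(\|A\|,\|B\|)$, which in particular gives $\|\tilde Y\| = \|Y\|$; and (ii) subtracting the two block-antidiagonal matrices above yields another block-antidiagonal matrix, so
\begin{align*}
\|e^{-(\tilde X + \tilde Y)^2 t}(\tilde X + \tilde Y) - e^{-\tilde X^2 t} \tilde X\|
\; \geq \; \|e^{-(X+Y)(X+Y)^{\top} t}(X+Y) - e^{-XX^{\top} t} X\|\,.
\end{align*}

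Applying Claim~\ref{claim:tech-telescope-symm} to the symmetric matrices $\tilde X, \tilde Y \in \R^{(n+d) \times (n+d)}$ gives $\|e^{-(\tilde X + \tilde Y)^2 t}(\tilde X + \tilde Y) - e^{-\tilde X^2 t} \tilde X\| \leq 3\|\tilde Y\| = 3\|Y\|$, and combining with the previous display completes the proof. I do not anticipate a serious obstacle, since the symmetric case has done all the analytic work; the only items to verify are the routine block-matrix identities above, and that Claim~\ref{claim:tech-telescope-symm} is applied in ambient dimension $n+d$ rather than $n$ (which is immediate since it was stated for arbitrary symmetric matrices).
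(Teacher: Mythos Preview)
Your proposal is correct and follows essentially the same approach as the paper: both reduce to Claim~\ref{claim:tech-telescope-symm} via the Hermitian dilation $\tilde X = \begin{pmatrix}0 & X \\ X^{\top} & 0\end{pmatrix}$, use the block-diagonal form of $\tilde X^2$ to compute $e^{-\tilde X^2 t}\tilde X$, and read off the desired block from the anti-diagonal difference. The only cosmetic difference is that the paper first pads $X,Y$ with zeros to force $n=d$ before dilating, whereas you work directly in $\R^{(n+d)\times(n+d)}$; your version is slightly cleaner for that reason, but the idea is identical.
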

\begin{proof}
We will use Claim~\ref{claim:tech-telescope-symm}, combined with a general method for lifting facts from symmetric matrices to asymmetric matrices (see, e.g., \cite{tao2011topics} for other similar arguments).
Assume without loss of generality that $n = d$, since otherwise we can pad with zeros. Define $\bar{n} = 2n$ and
\begin{align*}\bar{X} = \begin{bmatrix} 0 & X^{\top} \\ X & 0
\end{bmatrix} \in \R^{\bar{n} \times \bar{n}} \mbox{ and } \bar{Y} = \begin{bmatrix} 0 & Y^{\top} \\ Y & 0 \end{bmatrix} \in \R^{\bar{n} \times \bar{n}}.\end{align*} These are symmetric matrices by construction. Furthermore,
\begin{align*}
\bar{X}^2 = \begin{bmatrix} X^{\top} X & 0 \\ 0 & X X^{\top}\end{bmatrix} \mbox{ and } (\bar{X} + \bar{Y})^2 = \begin{bmatrix} (X+Y)^{\top} (X+Y) & 0 \\ 0 & (X+Y) (X+Y)^{\top}\end{bmatrix}.
\end{align*}
Because of the block-diagonal structure of these matrices,
\begin{align*}
e^{-\bar{X}^2 t} = \begin{bmatrix} e^{-X^{\top}X t} & 0 \\ 0 & e^{-XX^{\top} t} \end{bmatrix} & \mbox{ and } e^{-(\bar{X}+\bar{Y})^2 t} = \begin{bmatrix} e^{-(X+Y)^{\top}(X+Y) t} & 0 \\ 0 & e^{-(X+Y)(X+Y)^{\top} t} \end{bmatrix}.
\end{align*}
So
\begin{align*}
e^{-\bar{X}^2 t} \bar{X} = \begin{bmatrix} 0 & e^{-X^{\top}X t} X^{\top} \\ e^{-XX^{\top} t} X & 0 \end{bmatrix}.
\end{align*}
Similarly,
\begin{align*}e^{-(\bar{X}+\bar{Y})^2 t} (\bar{X} + \bar{Y}) = \begin{bmatrix} 0 & e^{-(X+Y)^{\top}(X+Y) t} (X+Y)^{\top} \\ e^{-(X+Y)(X+Y)^{\top} t} (X+Y) &  0 \end{bmatrix}.
\end{align*}

For any matrix $M \in \R^{n \times n}$, we have $\|M\| = \sup_{\bv \in \S^{n-1}} \|M \bv\|$. So for any matrices $M_1, M_2 \in \R^{n \times n}$, we have \begin{align*}
\left\|\begin{bmatrix} 0 & M_1 \\ M_2 & 0 \end{bmatrix} \right\| & \geq \sup_{\bv \in \S^{n-1}} \left\|\begin{bmatrix} 0 & M_1 \\ M_2 & 0 \end{bmatrix} \begin{bmatrix} \bv \\ 0 \end{bmatrix}\right\| = \sup_{\bv \in \S^{n-1}} \|M_2 \bv\| = \|M_2\|.
\end{align*}
This means that, using Claim~\ref{claim:tech-telescope-symm},
\begin{align*}
\|e^{-(X+Y)(X+Y)^{\top}t}(X+Y) - e^{-XX^{\top} t} X\| &\leq \|e^{-(\bar{X} + \bar{Y})^2 t}(\bar{X} + \bar{Y}) - e^{\bar{X}^2t} \bar{X}\| \leq 3\|\bar{Y}\|\,.
\end{align*}
Finally, using the symmetry of $\bar{Y}$ and the block-diagonal structure of $\bar{Y}^2$,
\begin{align*}\|\bar{Y}\| &= \|\bar{Y}^2\|^{1/2} = \left\|\begin{bmatrix} Y^{\top} Y & 0 \\ 0 & YY^{\top} \end{bmatrix} \right\|^{1/2} = \max(\|Y^{\top} Y\|^{1/2}, \|YY^{\top}\|^{1/2}) = \|Y\|.
\end{align*}

\end{proof}

We conclude by using these results to prove Claim~\ref{claim:technical-center} with a telescoping argument.

\claimtechnicalcenter*
\begin{proof}

We will do this by approximating the product integral by a finite product of $m$ matrices, and taking the limit $m \to \infty$. For any $m \geq 0$ and $j \in \{0,\ldots,m\}$, and $t \in [0,T]$, let $t_{m,j} = (b-a)(j/m) + a$, and define the finite-product approximation to the product integral for any $k \in \{0,\ldots,m\}$
$$P_{m,k} = \left(\prod_{j=k+1}^m Q_{m,j}\right) Q_{m,k}^k\,,$$
where $$Q_{m,j} = e^{-X_{t_{m,j}} X_{t_{m,j}}^{\top} (b-a)/m}\,.$$
Notice that for $k = 0$ we have $$P_{m,0} = \left(\prod_{j=1}^m e^{-X_{t_{m,j}} X_{t_{m,j}}^{\top} (b-a)/m}\right)\,,$$ and so by continuity of $s \mapsto  X_{s}$ for $s \in [0,T]$ and boundedness of $X_{s}$, we know from Theorem~1.1 of \cite{dollard_friedman_1984} that
$$\lim_{m \to \infty} P_{m,0} = \prod_{a}^b e^{-X_s X_s^{\top} ds}\,.$$
This means that 
\begin{align*}
\|e^{-X_bX_b^{\top} (b-a)} X_b - \left(\prod_{a}^b e^{-X_s X_s^{\top} ds}\right) X_a\| &= \|P_{m,m} X_{t_{m,m}} - \left(\prod_{a}^b e^{-X_s X_s^{\top} ds}\right) X_a\| \\
&= \lim_{m \to \infty} \|P_{m,m} X_{t_{m,m}} - P_{m,0} X_{t_{m,0}}\|\,.
\end{align*}
Finally, telescoping and (a) using $\|Q_{m,j}\| \leq 1$ for all $j$, and (b) Claim~\ref{claim:tech-telescope-asymm},
\begin{align*}
\|P_{m,m}& X_{t_{m,m}} - P_{m,1} X_{t_{m,1}}\| \leq \sum_{k=1}^m \|P_{m,k} X_{t_{m,k}} - P_{m,k-1} X_{t_{m,k-1}}\| \\
&= \sum_{k=1}^m \|\left(\prod_{j=k+1}^m Q_{m,j} \right) (Q_{m,k}^k X_{t_{m,k}}  - Q_{m,k} (Q_{m,{k-1}})^{k-1} X_{t_{m,k-1}})\| \\
&\stackrel{(a)}{\leq} \sum_{k=1}^m \|(Q_{m,k})^{k-1} X_{t_{m,k}}  - (Q_{m,{k-1}})^{k-1} X_{t_{m,k-1}}\| \\
&\stackrel{(b)}{\leq} \sum_{k=1}^m 3\|X_{t_{m,k}} - X_{t_{m,k-1}}\| \\
&\leq 3 \cV(\{X_s\}_{s \in [a,b]}).
\end{align*}
The lemma follows by taking $m \to \infty$, since the bound is independent of $m$.
\end{proof}

\section{Proof of Theorem~\ref{thm:tight-converse}}\label{app:tight-converse}
The converse bound in Theorem~\ref{thm:ntk-train-error} is achieved in the simple case where $h : \R \to \R$ is given by $h(w) = aw + \frac{1}{2}bw^2$ for $a = \frac{1}{\sqrt{T}}$ and $b = \Lip(Dh)$. We also let $w_0 = 0$ and $y^* = \sqrt{2R_0}$ so that all conditions are satisfied.
The evolution of the residuals $r(t) = y^* - \alpha h(w(t))$ and $\barr(t) = y^* - \alpha \barh(\barw(t))$ is given by
\begin{align*}
\frac{dr}{dt} = -K_t r \quad \mbox{ and } \quad \frac{d \barr}{dt} = - K_0 \barr,
\end{align*}
where $K_t = Dh(w(t)) Dh(w(t))^{\top} = (a+bw(t))^2$ and $K_0 = a^2$. Since
$r = y^* - \alpha (aw + \frac{b}{2} w^2)$, we can express the evolution of the residuals $r$ and $\barr$ as:
\begin{align}\label{eq:tight-converse-explicit-evolution}
\frac{dr}{dt} = - (a^2 + 2b(y^* - r) / \alpha) r \quad \mbox{ and } \quad\frac{d\barr}{dt} = -a^2 \barr\,.
\end{align}
Since $b (y^* - r) / \alpha \geq 0$ at all times, we must have $a^2 + 2b(y^* - r) / \alpha \geq a^2$. This means that at all times
\begin{align*}
r(t) \leq \barr(t) = e^{-a^2 t} y^*\,.
\end{align*}
So, at any time $t \geq T / 2$,
\begin{align*}
r(t) \leq r(T / 2) \leq \barr(T / 2) = e^{-(1/\sqrt{T})^2 (T / 2)} y^* = e^{-1/2} y^*,.
\end{align*}
By plugging this into \eqref{eq:tight-converse-explicit-evolution}, for times $t \geq T / 2$,
\begin{align*}
\frac{dr}{dt} \leq -(a^2 + 2\Lip(Dh) (1 - e^{-1/2})\sqrt{2R_0} / \alpha) r \leq -(a^2 + 1.1 \kappa / T) r.
\end{align*}
So, at time $T$, assuming that $\kappa \leq 1$ without loss of generality,
\begin{align*}
r(T) &\leq r(T/2) e^{-(1 / T + 1.1 \kappa / T) (T/2)} = r(T/2) e^{-1/2 - 0.55\kappa} \leq y^* e^{-1} e^{-0.55\kappa} \leq y^* e^{-1} (1 - 0.4\kappa)\,.
\end{align*}
So
\begin{align*}
|\alpha h(w(T)) - \alpha \bar{h}(\barw(T))| = |r(T) - \barr(T)| \geq |e^{-1} y^* - (1-0.4\kappa) e^{-1} y^*| \geq 0.4\kappa e^{-1} \sqrt{2R_0} \geq \sqrt{R_0} / 5.
\end{align*}
\qed

\section{Deferred details from Section~\ref{sec:network-application}}\label{app:deferred-network-application}

The bound on $\Lip(Dh)$ for 2-layer networks is below.
\lipdhtwolayer*
\begin{proof}
Let $p = m + md$ be the number of parameters of the network. Then $Dh \in \R^{n \times p}$ is
\begin{align*}
Dh = \frac{1}{\sqrt{n}} \begin{bmatrix}Df_{\bw}(\bx_1) \\ \vdots \\ Df_{\bw}(\bx_n)\end{bmatrix}\,.
\end{align*}
So 
\begin{align*}
\Lip(Dh) &\leq \max_{i \in [n]} \Lip(Df_{\bw}(\bx_i))\,.
\end{align*}
So for the 2-layer network,
\begin{align*}
\Lip(Dh) &\leq \max_{i \in [n]} \frac{1}{m}\Lip(\begin{bmatrix} \sigma(\sqrt{m}\<\bu_1,\bx_i\>) &  \ldots & \sigma(\sqrt{m}\<\bu_m,\bx_i\>)\end{bmatrix}) \\
&\quad\quad\quad\quad + \frac{1}{\sqrt{m}} \Lip(\begin{bmatrix} a_1\sigma'(\sqrt{m}\<\bu_1,\bx_i\>)\bx_i^{\top} &  \ldots & a_m\sigma'(\sqrt{m}\<\bu_m,\bx_i\>)\bx_i^{\top} \end{bmatrix}) \\
&= \max_{i \in [n]} \frac{1}{\sqrt{m}} \left\|\begin{bmatrix} \sigma'(\sqrt{m}\<\bu_1,\bx_i\>) \bx_i^{\top} & \bzero & \bzero & \ldots & \bzero \\
\bzero & \sigma'(\sqrt{m}\<\bu_2,\bx_i\>) \bx_i^{\top} & \bzero & \ldots & \bzero \\
\vdots & & & & \\
\bzero & \ldots & & \bzero & \sigma'(\sqrt{m}\<\bu_m,\bx_i\>)\bx_i^{\top}\end{bmatrix}\right\| \\
&\quad\quad\quad\quad + \frac{\|\bx_i\|}{\sqrt{m}} \Lip(\begin{bmatrix} a_1\sigma'(\sqrt{m}\<\bu_1,\bx_i\>) &  \ldots & a_m\sigma'(\sqrt{m}\<\bu_m,\bx_i\>) \end{bmatrix}) \\
&\leq \frac{\|\sigma'\|_{\infty}\|\bx_i\|}{\sqrt{m}} + \frac{\|\bx_i\|}{\sqrt{m}} \|\diag(\begin{bmatrix} \sigma'(\sqrt{m}\<\bu_1,\bx_i\>) &  \ldots & \sigma'(\sqrt{m}\<\bu_m,\bx_i\>) \end{bmatrix})\| \\
&\quad\quad\quad\quad + \|\bx_i\| \left\|\begin{bmatrix} a_1\sigma''(\sqrt{m}\<\bu_1,\bx_i\>)\bx_i^{\top} & \bzero & \ldots & \bzero \\
\vdots & & & \vdots \\
\bzero & \ldots & \bzero & a_m\sigma''(\sqrt{m}\<\bu_m,\bx_i\>)\bx_i^{\top} \end{bmatrix} \right\| \\
&\leq 2\frac{\|\sigma'\|_{\infty} \|\bx_i\|}{\sqrt{m}} + \|\sigma''\|_{\infty} \|\bx_i\|^2 \|\ba\|_{\infty}
\end{align*}

\end{proof}

\end{document}